\documentclass[12pt]{article}

  \setlength{\oddsidemargin}{0.25in}
  \setlength{\evensidemargin}{0.25in}
  \setlength{\marginparwidth}{0.07 true in}
  \setlength{\topmargin}{-0.5in}
  \addtolength{\headsep}{0.25in}
  \setlength{\textheight}{8.5 true in}
  \setlength{\textwidth}{6.0 true in}

\usepackage{color}
\usepackage[utf8]{inputenc}
\usepackage[T1]{fontenc}    
\usepackage{url}            
\usepackage{booktabs}       
\usepackage{amsfonts}       
\usepackage{nicefrac}       
\usepackage{microtype}      

\usepackage{amsmath}
\usepackage{amsfonts}
\usepackage{mathrsfs}
\usepackage{amssymb}
\usepackage{amsthm}
\usepackage{wrapfig}

\usepackage{thmtools}
\usepackage{thm-restate}

\usepackage{bm}

\usepackage{cleveref}

\usepackage{graphicx,subfigure}
\usepackage{tikz}
\usepackage{comment}

\usepackage{epstopdf}

\usepackage{libertine}

\usepackage{mathtools}
\DeclarePairedDelimiter\ceil{\lceil}{\rceil}

\usepackage{algorithm}
\usepackage{algorithmic}

\usepackage[numbers]{natbib}
\newdimen\arrowsize
\pgfarrowsdeclare{arcsq}{arcsq}
{
  \arrowsize=0.2pt
  \advance\arrowsize by .5\pgflinewidth
  \pgfarrowsleftextend{-4\arrowsize-.5\pgflinewidth}
  \pgfarrowsrightextend{.5\pgflinewidth}
}
{
  \arrowsize=1.5pt
  \advance\arrowsize by .5\pgflinewidth
  \pgfsetdash{}{0pt} 
  \pgfsetroundjoin   
  \pgfsetroundcap    
  \pgfpathmoveto{\pgfpoint{0\arrowsize}{0\arrowsize}}
  \pgfpatharc{-90}{-140}{4\arrowsize}
  \pgfusepathqstroke
  \pgfpathmoveto{\pgfpointorigin}
  \pgfpatharc{90}{140}{4\arrowsize}
  \pgfusepathqstroke
}


\newtheorem{theorem}{Theorem}
\newtheorem{lemma}{Lemma}

\newtheorem{remark}{Remark}


\title{Why ResNet Works? Residuals Generalize}

\font\myfont=cmr12 at 13pt

\author{\myfont Fengxiang~He\thanks{UBTECH Sydney Artificial Intelligence Centre and School of Computer Science, Faculty of Engineering and Information Technologies, the University of Sydney, Darlington, NSW 2008, Australia. E-mail: fengxiang.he@sydney.edu.au, tongliang.liu@sydney.edu.au, and dacheng.tao@sydney.edu.au. First version in September 2018, second version in November 2018, and third version in February 2019.} \ \ \     Tongliang~Liu\footnotemark[1] \ \ \     Dacheng~Tao\footnotemark[1]}

\date{}

\begin{document}
\bibliographystyle{plain}

\maketitle

\begin{abstract}
  Residual connections significantly boost the performance of deep neural networks. However, there are few theoretical results that address the influence of residuals on the hypothesis complexity and the generalization ability of deep neural networks. This paper studies the influence of residual connections on the hypothesis complexity of the neural network in terms of the covering number of its hypothesis space. We prove that the upper bound of the covering number is the same as chain-like neural networks, if the total numbers of the weight matrices and nonlinearities are fixed, no matter whether they are in the residuals or not. This result demonstrates that residual connections may not increase the hypothesis complexity of the neural network compared with the chain-like counterpart. Based on the upper bound of the covering number, we then obtain an $\mathcal O(1 / \sqrt{N})$ margin-based multi-class generalization bound for ResNet, as an exemplary case of any deep neural network with residual connections. Generalization guarantees for similar state-of-the-art neural network architectures, such as DenseNet and ResNeXt, are straight-forward. From our generalization bound, a practical implementation is summarized: to approach a good generalization ability, we need to use regularization terms to control the magnitude of the norms of weight matrices not to increase too much, which justifies the standard technique of weight decay.
\end{abstract}

\newpage
\section{Introduction}
\label{introduction}

The recent years saw dramatic progress of deep neural networks \cite{lecun2015deep, greff2017lstm, shi2018face, silver2016mastering, litjens2017survey, chang2018generalization}. Since ResNet \cite{he2016deep}, residual connections have been widely used in many state-of-the-art neural network architectures \cite{he2016deep, huang2017densely, xie2017aggregated}, and lead a series of breakthroughs in computer vision \cite{krizhevsky2012imagenet, abadi2016tensorflow, lin2017feature, he2017mask, chen2018deeplab}, data mining \cite{witten2016data}, and so forth. Numerous empirical results are showing that residual connections can significantly ease the difficulty of training deep neural networks to fit the training sample while maintaining excellent generalization ability on test examples. However, little theoretical analysis has been presented on the effect of residual connections on the generalization ability of deep neural networks.

Residuals connect layers which are not neighboured in chain-like neural networks. These new constructions break the convention that stacking layers one by one to build a chain-like neural network. They introduce loops into neural networks, which are previously chain-like. Thus, intuitively, residual connections could significantly increase the complexity of the hypothesis space of the deep neural network, and therefore lead to a significantly worse generalization ability according to the principle of Occam's razor, which demonstrates a negative correlation between the generalization ability of an algorithm and its hypothesis complexity. Leaving this problem elusive could set restrictions on applying the recent progress of neural networks with residual connections to safety-critical domains, from autonomous vehicles \cite{janai2017computer} to medical diagnose \cite{esteva2017dermatologist}, in which algorithmic mistakes could lead to fatal disasters.

In this paper, we explore the influence on the hypothesis complexity induced by residual connections in terms of the covering number of the hypothesis space. An upper bound for the covering number is proposed. Our bound demonstrate that, when the total number of weight matrices involved in a neural network is fixed, the upper bound on the covering number remains the same, no matter whether the weight matrices are in the residual connections or in the ``stem''\footnote{The ``stem'' is defined to denote the chain-like part of the neural network besides all the residuals. For more details, please refer to Section \ref{stemVine}.}. This result indicates that residual connections may not increase the complexity of the hypothesis space compared with a chain-like neural network if the total numbers of the weight matrices and the non-linearities are fixed. Based on the upper bound on the covering number, we further prove an $\mathcal O(1 / \sqrt{N})$ generalization bound for ResNet as an exemplary case for all neural networks with residual connections, where $N$ is denoted to the training sample size. Based on our framework, generalization bounds for similar architectures constructed by adding residual connections to chain-like neural networks can be straightly obtained.

Our generalization bound closely depends on the product of the norms of all weight matrices. Specifically, there is a negative correlation between the generalization ability of a neural network with the product of the norms of all weight matrices. This feature leads to a practical implementation: 
\begin{quotation}
	{\it To approach a good generalization ability, we need to use regularization terms to control the magnitude of the norms of weight matrices.}
\end{quotation}
This implementation justifies the standard technique of weight decay in training deep neural networks, which uses the $L_{2}$ norm of the weights as a regularization term \cite{krogh1992simple}.

The rest of this paper is structured as follows. Section \ref{sec:review} reviews the existing literature regarding the generalization ability of deep neural networks in both theoretical and empirical aspects. Section \ref{preliminary} provides necessary preliminaries. Section \ref{stemVine} summarises the notation for deep neural networks with residual connections as the stem-vine framework. Section \ref{generalizationBound} presents our main results: a covering bound for deep neural networks with residual connections, a covering bound for ResNet, a generalization bound for ResNet, and a practical implementation from the theoretical results. Section \ref{sec:proof} collects all the proofs. And Section \ref{discussionConclusion} concludes this paper.

\section{Related Works}
\label{sec:review}

Understanding the generalization ability has vital importance to the development of deep neural networks. There already exist some results approaching this goal.

Zhang et al. conduct systematic experiments to explore the generalization ability of deep neural networks \cite{zhang2016understanding}. They show that neural networks can almost perfectly fit the training data even when the training labels are random. This paper attracts the community of learning theory to the important topic that how to theoretically interpret the success of deep neural networks.

Kawaguchi et al. discuss many open problems regarding the excellent generalization ability of deep neural networks despite the large capacity, complexity, possible algorithmic instability, nonrobustness, and sharp minima \cite{kawaguchi2017generalization}. They also provide some insights to solve the problems.

Harvey et al. prove upper and lower bounds on the VC-dimension of the hypothesis space of deep neural networks with the activation function of ReLU \cite{harvey2017nearly}. Specifically, the paper presents an $\mathcal O(W L \log(W))$ upper bound for the VC-dimension and an example of such networks with the VC-dimension $\Omega(W L\log(W/L))$, where $W$ and $L$ are respectively denoted to the width and depth of the neural network. The paper also gives a tight bound $\Theta (WU)$ for the VC-dimension of any deep neural network, where $U$ is the number of the hidden units in the neural network. The upper bounds of the VC-dimensions lead to an $\mathcal O(h/N)$ generalization bound, where $h$ is the VC-dimension and $N$ is the training sample size \cite{mohri2012foundations}.

Golowich et al. study the sample complexity of deep neural networks and present upper bounds on the Rademacher complexity of the neural networks in terms of the norm of the weight matrix in each layer \cite{golowich2017size}. Compared to previous works, these complexity bounds have improved dependence on the network depth, and under some additional assumptions, are fully independent of the network size (both depth and width). The upper bounds on the Rademacher complexity further lead to $\mathcal O(\frac{1}{\sqrt{N}})$ upper bounds on the generalization error of neural networks.

Neyshabur et al. explore several methods that could explain the generalization ability of deep neural networks, including norm-based control, sharpness, and robustness \cite{neyshabur2017exploring}. They study the potentials of these methods and highlight the importance of scale normalization. Additionally, they propose a definition of the sharpness and present a connection between the sharpness and the PAC-Bayes theory. They also demonstrate how well their theories can explain the observed experimental results.

Lang et al. explore the capacity measures for deep neural networks from a geometrical invariance viewpoint \cite{liang2017fisher}. They propose to use Fisher-Rao norm to measure the capacity of deep neural networks. Motivated by information geometry, they reveal the invariance property of the Fisher-Rao norm. The authors further establish some norm-comparison inequalities which demonstrate that the Fisher-Rao norm is an umbrella for many existing norm-based complexity measures. They also present experimental results to support their theoretical findings.

Novak et al. conduct comparative experiments to study the generalization ability of deep neural networks \cite{novak2018sensitivity}. The empirical results demonstrate that the input-output Jacobian norm and linear region counting play vital roles in the generalization ability of networks. Additionally, the generalization bound is also highly dependent on how close the output hypothesis is to the data manifold.

Two recent works respectively by Bartlett et al. \cite{bartlett2017spectrally} and Neyshabur et al. \cite{neyshabur2017pac} provide upper bounds for the generalization error of chain-like deep neural networks. Specifically, \cite{bartlett2017spectrally} proposes an $\mathcal O(1/\sqrt{N})$ spectral-normalized margin-based generalization bound by upper bounding the Rademacher complexity/covering number of the hypothesis space through the divide-and-conquer strategy. Meanwhile, \cite{neyshabur2017pac} obtains a similar result under the PAC-bayesian framework. Our work is partially motivated by the analysis in \cite{bartlett2017spectrally}.

Other advances include \cite{mhaskar2017and, shwartz2017opening, li2018visualizing, achille2018emergence, zhang2018information, soltanolkotabi2019theoretical}.

\section{Preliminary}
\label{preliminary}

In this section, we present the preliminaries necessary to develop our theory. It has two main parts: (1) important concepts to express the generalization capability of an algorithm; and (2) a margin-based generalization bound for multi-class classification algorithms. The preliminaries provide general tools for us to theoretically analyze multi-class classification algorithms.

Generalization bound is the upper bound of the generalization error which is defined as the difference between the expected risk (or, equivalently, the expectation of test error) of the output hypothesis of an algorithm and the corresponding empirical risk (or, equivalently, the training error).\footnote{Some works define generalization error as the expected error of an algorithm (see, e.g., \cite{mohri2012foundations}). As the training error is fixed when both training data and the algorithm are fixed, this difference in definitions can only lead to a tiny difference in results. In this paper, we select one for the brevity and would not limit any generality.} Thus, the generalization bound quantitatively expresses the generalization capability of an algorithm.

As indicated by the principle of Occam's razor, there is a negative correlation between the generalization capability of an algorithm and the complexity of the hypothesis space that the algorithm can compute. Two fundamental measures for the complexity are {\it VC dimension} and {\it Rademacher complexity} (see, respectively, \cite{vapnik1974theory} and \cite{bartlett2002rademacher}). Furthermore, they can be upper bounded by another important complexity {\it covering number} (see, respectively, \cite{dudley2010sizes} and \cite{haussler1995sphere}). Recent advances include local Rademacher complexity and algorithmic stability (see, respectively, \cite{bartlett2005local} and \cite{bousquet2001algorithmic,liu2017algorithmic}). These theoretical tools have been widely applied to analyze many algorithms (see, e.g., \cite{li2018better, han2018matrix, meng2017generalization, tian2018eigenfunction}).

To formally formularise the problem, we first define the {\it margin operator} $\mathcal M$ for the $k$-class classification task as
\begin{align}
	\mathcal M: ~ & \mathbb R^{k} \times \{ 1, \ldots, k \} \to \mathbb R ~, \nonumber\\
	& (v, y) \mapsto v_{y} - \max_{i \ne y} v_{i} ~.
\end{align}
Then, {\it ramp loss} $l_{\lambda}: \mathbb R \to \mathbb R^{+}$ is defined as
\begin{equation}
	l_{\lambda}(r) = \begin{cases}
		0, & r < -\lambda ~,\\
		1 + r/\lambda, & -\lambda \le r \le 0 ~,\\
		1, & r > 0 ~.
	\end{cases}
\end{equation}
Furthermore, given a hypothesis function $F: \mathbb R^{n_{0}} \to \mathbb R^{k}$ for the $k$-class classification, {\it empirical ramp risk} on a dataset $D = \{(x_{1}, y_{1}), \ldots, (x_{n}, y_{n})\}$ is defined as
\begin{equation}
	\hat{\mathcal R}_{\lambda}(F) = \frac{1}{n} \sum_{i = 1}^{n}(l_{\lambda}(-\mathcal M(F(x_{i}), y_{i}))) ~.
\end{equation}
Empirical ramp risk $\hat{\mathcal R}_{\lambda}(F)$ expresses the training error of the hypothesis function $F$ on the dataset $D$.

Meanwhile, the expected risk (and also, equivalently, the expected test error) of the hypothesis function $F$ under $0$-$1$ loss is
\begin{equation}
	\Pr\{ \arg\max_{i} F(x)_{i} \ne y \} ~,
\end{equation}
where $x$ is an arbitrary feature, $y$ is the corresponding correct label, and the probability is in term of the pair $(x, y)$.

Suppose a {\it hypothesis space} $\mathcal H|_{D}$ is constituted by all hypothesis functions that can be computed by a neural network trained on a dataset $D$. The {\it empirical Rademacher complexity} of the hypothesis space $\mathcal H|_{D}$ is defined as
\begin{equation}
	\hat{\mathfrak R}(\mathcal H|_{D}) = \mathbb E_{\bm{\epsilon}} \left[ \sup_{F \in \mathcal H|_{D}} \frac{1}{n}  \sum_{i = 1}^{n} \epsilon_{i} F(x_{i}, y_{i}) \right] ~,
\end{equation}
where $\bm{\epsilon} = (\epsilon_{1}, \ldots, \epsilon_{n})$ and $\epsilon_{i}$ is a uniform variable on $\{ -1, +1 \}$. A margin-based bound for multi-class classifiers is given as the following lemma.
\begin{lemma} [see \cite{bartlett2017spectrally}, Lemma 3.1]
\label{RadBound}
	Given a function set $\mathcal H$ that $\mathcal H \ni F: ~ \mathbb R^{n_{0}} \to \mathbb R^{n_{L}}$ and any margin $\lambda > 0$, define
	\begin{equation}
		\mathcal H_{\lambda} \triangleq \{ (x, y) \mapsto l_{\lambda}(-\mathcal M(F(x), y)): F \in \mathcal H \} ~.
	\end{equation}
	Then, for any $\delta \in (0, 1)$, with probability at least $1 - \delta$ over a dataset $D$ of size $n$, every $F \in \mathcal H|_{D}$ satisfies
	\begin{align}
	\label{formulaRadBound}
		\Pr\{ \arg\max_{i} F(x)_{i} \ne y \} - \hat{\mathcal R}_{\lambda}(F) \le 2 \hat{\mathfrak R}(\mathcal H_{\lambda}|_{D}) + 3 \sqrt{\frac{\log(1/\delta)}{2n}} ~.
	\end{align}
\end{lemma}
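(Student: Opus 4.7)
The plan is to use the ramp loss as a bridge between the 0-1 classification error and the Rademacher complexity of the surrogate class $\mathcal H_\lambda$, and then invoke the standard Rademacher uniform-convergence theorem for bounded loss functions.

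First, I would record the pointwise inequality
\[
\mathbf{1}\{\arg\max_i F(x)_i \ne y\} \le l_\lambda(-\mathcal M(F(x), y)).
\]
When $y$ is not the argmax, $\mathcal M(F(x), y) = F(x)_y - \max_{i\ne y} F(x)_i \le 0$, so $-\mathcal M(F(x),y) \ge 0$ and the ramp loss clips to $1$; when $y$ is the argmax, the indicator is $0$ while the ramp loss is nonnegative. Taking expectation over $(x,y)$ yields
\[
\Pr\{\arg\max_i F(x)_i \ne y\} \le \mathbb E_{(x,y)}\!\left[l_\lambda(-\mathcal M(F(x),y))\right],
\]
so it suffices to control the expected ramp risk in terms of $\hat{\mathcal R}_\lambda(F)$ uniformly over $F \in \mathcal H$.

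Second, I would invoke the standard Rademacher argument for the $[0,1]$-valued class $\mathcal H_\lambda$. Since $l_\lambda \in [0,1]$, McDiarmid's inequality applied to $\Phi(D) = \sup_{g \in \mathcal H_\lambda}\bigl(\mathbb E[g] - \tfrac{1}{n}\sum_{i=1}^n g(x_i,y_i)\bigr)$ gives, with probability at least $1-\delta/2$, a one-sided deviation of order $\sqrt{\log(2/\delta)/(2n)}$ from its expectation, and the usual symmetrization bounds $\mathbb E\Phi(D) \le 2\mathfrak R_n(\mathcal H_\lambda)$, where $\mathfrak R_n$ is the population Rademacher complexity. A further McDiarmid application, exploiting that $\hat{\mathfrak R}(\mathcal H_\lambda|_D)$ has bounded differences of size $1/n$, allows me to replace $\mathfrak R_n(\mathcal H_\lambda)$ by $\hat{\mathfrak R}(\mathcal H_\lambda|_D)$ at the cost of one more $\sqrt{\log(2/\delta)/(2n)}$ term, again with probability at least $1-\delta/2$. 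Combining the two high-probability statements by a union bound and collecting the constants into a single factor of $3$ in front of $\sqrt{\log(1/\delta)/(2n)}$ produces exactly the inequality in (\ref{formulaRadBound}).

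The main obstacle is essentially bookkeeping, since the statement is quoted from Bartlett et al.; the substantive conceptual points are that (i) the ramp loss is the correct $[0,1]$-valued surrogate that dominates the $0$-$1$ error while remaining $1/\lambda$-Lipschitz, and (ii) the Rademacher machinery is applied to the already-composed class $\mathcal H_\lambda$ rather than to $\mathcal H$ itself, so that the bounded-range hypothesis of McDiarmid holds automatically and no separate contraction lemma for the margin operator $\mathcal M$ is required at this stage. Beyond those two observations, the chain of symmetrization and concentration is entirely standard.
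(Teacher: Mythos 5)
Your proposal is correct and follows the standard route: dominate the $0$--$1$ indicator by the ramp loss pointwise, then apply the usual symmetrization-plus-McDiarmid uniform-convergence bound to the $[0,1]$-valued composed class $\mathcal H_{\lambda}$ — which is exactly the argument behind Lemma 3.1 of Bartlett et al., to which this paper simply defers without reproving it. The only loose end is the final bookkeeping: the two-sided McDiarmid argument with a union bound naturally yields $3\sqrt{\log(2/\delta)/(2n)}$ rather than $3\sqrt{\log(1/\delta)/(2n)}$, a cosmetic discrepancy already present in the quoted statement and not a substantive gap.
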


This generalization bound is developed by employing Rademacher complexity which is upper bounded by covering number (see, respectively, \cite{zhang2002covering,zhang2004statistical} and \cite{haussler1995sphere,mohri2012foundations}). A detailed proof can be found in \cite{bartlett2017spectrally}. Lemma \ref{RadBound} relates the generalization capability (expressed by $\Pr\{ \arg\max_{i} F(x)_{i} \ne y \} - \hat{\mathcal R}_{\lambda}(F)$) to the hypothesis complexity (expressed by $\hat{\mathfrak R}(\mathcal H_{\lambda}|_{D})$). It suggests that if one can find an upper bound for empirical Rademacher complexity, an upper bound of generalization error can be straightly obtained. Bartlett et al. give a lemma that bounds empirical Rademacher complexity via upper bounding covering number \cite{bartlett2017spectrally} derived from the Dudley entropy integral bound \cite{dudley2010sizes,dudley2010universal}. Specifically, if the $\varepsilon$-covering number $\mathcal N(\mathcal H_{\lambda}|_{D}, \varepsilon, \|\cdot\|)$ is defined as the minimum number of the balls with radius $\varepsilon > 0$ needed to cover the space $\mathcal H_{\lambda}|_{D}$ with a norm $\|\cdot\|$, the lemma is as follows.

\begin{lemma}[see \cite{bartlett2017spectrally}, Lemma A.5]
\label{covNumBound}
	Suppose $\bm 0 \in \mathcal H_{\lambda}$ while all conditions in Lemma \ref{RadBound} hold. Then
	\begin{align}
	\label{formulaCovNumBound}
		\hat{\mathfrak R}(\mathcal H_{\lambda}|_{D}) \le \inf_{\alpha > 0} \left( \frac{4\alpha}{\sqrt{n}} + \frac{12}{n} \int_{\alpha}^{\sqrt n} \sqrt{\log \mathcal N(\mathcal H_{\lambda}|_{D}, \varepsilon, \| \cdot |_{2})} \text{d}\varepsilon \right) ~.
	\end{align}
\end{lemma}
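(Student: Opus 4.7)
The plan is to apply the classical Dudley entropy--integral/chaining argument. Fix $\alpha > 0$ and introduce a geometric grid of scales $\varepsilon_j = 2^{-j}\sqrt n$ for $j = 0, 1, \ldots, N$, with $N$ chosen so that $\varepsilon_N$ is the smallest grid point above $\alpha$. For each $j$, let $C_j \subseteq \mathcal H_\lambda|_D$ be a minimum proper $\varepsilon_j$-cover in the sample $L_2$ norm, so that $|C_j| = \mathcal N_j := \mathcal N(\mathcal H_\lambda|_D, \varepsilon_j, \|\cdot\|_2)$. Because the ramp loss takes values in $[0,1]$, every $F \in \mathcal H_\lambda|_D$ has sample norm at most $\sqrt n$, so the standing hypothesis $\bm 0 \in \mathcal H_\lambda$ lets me take $C_0 = \{\bm 0\}$, killing the coarsest term for free.

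Next, for every $F \in \mathcal H_\lambda|_D$ I would pick successive approximants $\pi_j(F) \in C_j$ with $\|F - \pi_j(F)\|_2 \le \varepsilon_j$ and use the telescoping identity
\begin{align}
F \;=\; \pi_0(F) \;+\; \sum_{j=1}^{N} \bigl(\pi_j(F) - \pi_{j-1}(F)\bigr) \;+\; \bigl(F - \pi_N(F)\bigr).
\end{align}
Substituting into the definition of $\hat{\mathfrak R}(\mathcal H_\lambda|_D)$ and distributing the supremum across the chain, the $\pi_0$-contribution is zero; a Cauchy--Schwarz bound on the residual piece $F - \pi_N(F)$ contributes a term of order $\varepsilon_N/\sqrt n$, which feeds the $4\alpha/\sqrt n$ slack; and each chaining increment $\pi_j(F) - \pi_{j-1}(F)$ has norm at most $\varepsilon_j + \varepsilon_{j-1} = 3\varepsilon_j$ and ranges over a set of at most $\mathcal N_j \cdot \mathcal N_{j-1} \le \mathcal N_j^2$ vectors. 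Massart's finite-class lemma then gives
\begin{align}
\mathbb E_{\bm\epsilon}\, \sup_F\, \frac{1}{n}\sum_{i=1}^{n} \epsilon_i\,\bigl(\pi_j(F) - \pi_{j-1}(F)\bigr)(x_i, y_i) \;\le\; \frac{6\,\varepsilon_j}{n}\sqrt{\log \mathcal N_j}.
\end{align}

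Finally, I would convert the dyadic sum into the Dudley integral: since $\mathcal N(\mathcal H_\lambda|_D, \varepsilon, \|\cdot\|_2)$ is non-increasing in $\varepsilon$, each summand $\varepsilon_j\sqrt{\log \mathcal N_j}$ is at most twice $\int_{\varepsilon_{j+1}}^{\varepsilon_j}\sqrt{\log \mathcal N(\mathcal H_\lambda|_D, \varepsilon, \|\cdot\|_2)}\,\mathrm d\varepsilon$, and these dyadic intervals tile $[\alpha,\sqrt n]$ up to a factor $2$ at the lower endpoint. Adding the residual and the chaining contributions and taking the infimum over $\alpha > 0$ yields the claimed inequality. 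The main obstacle is purely bookkeeping rather than conceptual: the grid choice, the Massart $\sqrt 2$, the factor $3$ in the triangle inequality, and the dyadic-to-Riemann slack must all line up to produce exactly the displayed constants $4$ and $12$; conceptually nothing more than the standard chaining machinery is required.
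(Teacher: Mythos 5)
Your proposal is correct: it is the standard Dudley entropy-integral chaining argument (dyadic scales, successive approximants, Massart's finite-class lemma on the increments, then converting the dyadic sum to the integral), which is precisely how the cited Lemma A.5 of Bartlett et al.\ is proved — the paper itself offers no proof and simply imports that result. The only detail to tidy is stopping the grid at the smallest scale above $2\alpha$ rather than above $\alpha$, so the dyadic intervals stay inside $[\alpha,\sqrt n]$ while the residual term is still absorbed by $4\alpha/\sqrt n$, exactly the bookkeeping you flagged.
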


Combining Lemmas \ref{RadBound} and \ref{covNumBound}, we relate the covering bound of an algorithm to the generalization bound of the algorithm. In the rest of this paper, we develop generalization bounds for deep neural networks with residual connections via upper bounding covering numbers.

To avoid technicalities, the measurability/integrability issues are ignored throughout this paper. Moreover, Fubini's theorem is assumed to be applicable for any integration with respect to multiple variables, that the order of integrations is exchangeable.

\section{Stem-Vine Framework}
\label{stemVine}

This section provides a notation system for deep neural networks with residual connections. Motivated by the topological structure, we call it the {\it stem-vine framework}.

In general, deep neural networks are constructed by connecting many weight matrices and nonlinear operators (nonlinearities), including ReLU, sigmoid, and max-pooling. In this paper, we consider a neural network constructed by adding multiple residual connections to a ``chain-like'' neural network that stacks a series of weight matrices and nonlinearities forward one by one. Motivated by the topological structure, we call the chain-like part as the {\it stem} of the neural network and call the residual connections as the {\it vines}. Both stems and vines themselves are constructed by stacking multiple weight matrices and nonlinearities. 

We denote the weight matrices and the nonlinearities in the stem $S$ respectively as
\begin{gather}
	A_{i} \in \mathbb R^{n_{i-1} \times n_{i}} ~,\\
	\sigma_{j}: \mathbb R^{n_{j}} \to \mathbb R^{n_{j}} ~,
\end{gather}
where $i = 1, \ldots, L$, $L$ is the number of weight matrices in the stem, $j = 1, \ldots, L_{N}$, $L_{N}$ is the number of nonlinearities in the stem, $n_{i}$ is the dimension of the output of the $i$-th weight matrix, $n_{0}$ is the dimension of the input data to the network, and $n_{L}$ is the dimension of the output of the network. Thus we can write the stem $S$ as a vector to express the chain-like structure. Here for the simplicity and without any loss of the generality, we give an example that the numbers of weight matrices and nonlinearities are equal\footnote{If two weight matrices, $A_{i}$ and $A_{i+1}$, are connected directly without a nonlinearity between them, we define a new weight matrix $A = A_{i} \cdot A_{i+1}$. The situations that nonlinearities are directly connected are similar, as the composition of any two nonlinearities is still a nonlinearity.

Meanwhile, the number of the weight matrices does not necessarily equal the number of nonlinearities. Sometimes, if a vine connects the stem at a vertex between two weight matrices (or two nonlinearities), the number of the weight matrices (nonlinearities) would be larger than the number of nonlinearities (weight matrices). Taken the $34$-layer ResNet as an example, a vine connects the stem between two nonlinearities $\sigma_{33}$ and $\sigma_{34}$. In this situation, we cannot merge the two nonlinearities, so the number of the nonlinearities is larger than the number of weight matrices.
}
, i.e., $L_{N} = L$, as the following equation,
\begin{equation}
\label{stemStructure}
	S = (A_{1}, \sigma_{1}, A_{2}, \sigma_{2}, \ldots, A_{L}, \sigma_{L}) ~.
\end{equation}

For the brevity, we give an index $j$ to each vertex between a weight matrix and a nonlinearity and denote the $j$-th vertex as $N(j)$. Specifically, we give the index $1$ to the vertex that receives the input data and $L + L_{N} + 1$ to the vertex after the last weight matrix/nonlinearity. Taken eq. (\ref{stemStructure}) as an example, the vertex between the nonlinearity $\sigma_{i-1}$ and the weight matrix $A_{i}$ is denoted as $N(2i - 1)$ and the vertex between the weight matrix $A_{i}$ and the nonlinearity $\sigma_{i}$ is denoted as $N(2i)$.

Vines are constructed to connect the stem at two different vertexes. And there could be over one vine connecting a same pair of the vertexes. Therefore, we use a triple vector $(s, t, i)$ to index the $i$-th vine connecting the vertexes $N(s)$ and $N(t)$ and denote the vine as $V(s, t, i)$. All triple vectors $(s, t, i)$ constitute an index set $I_{V}$, i.e., $(s, t, i) \in I_{V}$. Similar to the stem, each vine $V(s, t, i)$ is also constructed by a series of weight matrices $A^{s, t, i}_{1}, \ldots, A^{s, t, i}_{L^{s, t, i}}$ and nonlinearities $\sigma^{s, t, i}_{1}, \ldots, \sigma^{s, t, i}_{L^{s, t, i}_{N}}$, where $L^{s, t, i}$ is the number of weight matrices in the vine, while $L^{u,v,i}_{N}$ is the number of the nonlinearities.

\begin{figure}
\centering
\includegraphics[width=5in]{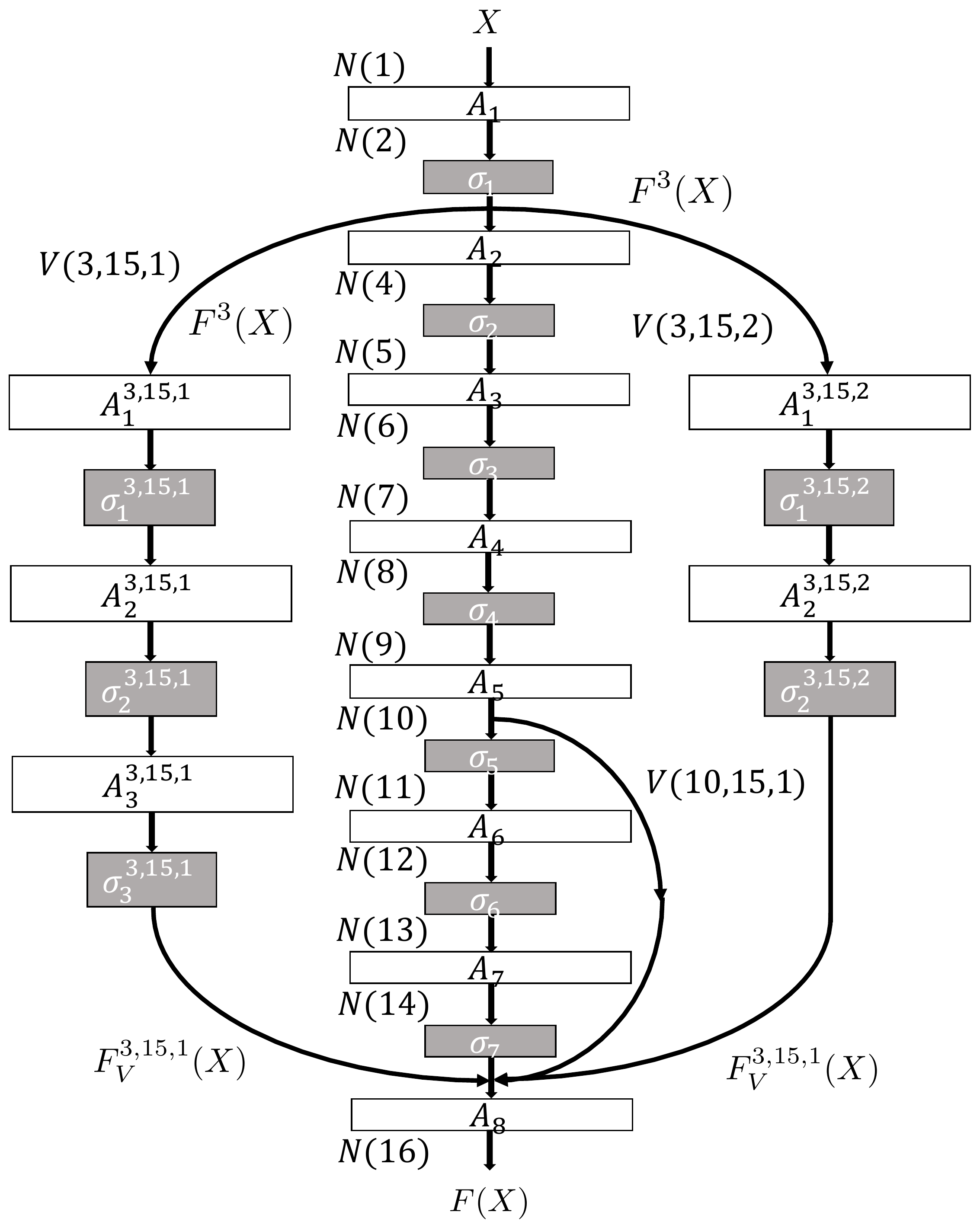}%
\caption{A Deep Neural Network with Residual Connections under the Stem-Vine Framework.}
\label{stemVineFrame}
\end{figure}

Multiplying by a weight matrix corresponds to an affine transformation on the data matrix. Also, nonlinearities induce nonlinear transformations. Through a series of affine transformations and nonlinear transformations, hierarchical features are extracted from the input data by neural networks. Usually, we use the {\it spectrum norms} of weight matrices and the {\it Lipschitz constants} of a nonlinearities to express the intensities respectively of the affine transformations and the nonlinear transformations. We call a function $f(x)$ is $\rho$-Lipschitz continuous if for any $x_{1}$ and $x_{2}$ in the support domain of $f(x)$, it holds that
\begin{equation}
	\| f(x_{1}) - f(x_{2}) \|_{f} \le \rho \| x_{1} - x_{2} \|_{x} ~,
\end{equation}
where $\| \cdot \|_{f}$ and $\| \cdot \|_{x}$ are respectively the norms defined on the spaces of $f(x)$ and $x$. Fortunately, almost all nonlinearities normally used in neural networks are Lipschitz continuous, such as ReLU, max-pooling, and sigmoid (see \cite{bartlett2017spectrally}).

Many important tasks for deep neural networks can be categorized into multi-class classification. Suppose input examples $z_{1} \ldots, z_{n}$ are given, where $z_{i} = (x_{i}, y_{i})$, $x_{i} \in \mathbb R^{n_{0}}$ is an instance, $y \in \{1, \ldots, n_{L}\}$ is the corresponding label, and $n^{L}$ is the number of the classes. Collect all instances $x_{1}, \ldots, x_{n}$ as a matrix $X = \left(x_{1}, \ldots, x_{n}\right)^{T} \in \mathbb R^{n \times n_{0}}$ that each row of $X$ represents a data point. By employing optimization methods (usually stochastic gradient decent, SGD), neural networks are trained to fit the training data and then predict on test data. In mathematics, a trained deep neural network with all parameters fixed computes a hypothesis function $F: \mathbb R^{n_{0}} \to \mathbb R^{n_{L}}$. And a natural way to convert $F$ to a multi-class classifier is to select the coordinate of $F(x)$ with the largest magnitude. In other words, for an instance $x$, the classifier is $x \mapsto \arg\max_{i} F(x)_{i}$. Correspondingly, the {\it margin} for an instance $x$ labelled as $y_{i}$ is defined as $F(x)_{y} - \max_{i \ne y} F(x)_{i}$. It quantitatively expresses the confidence of assigning a label to an instance.

To express $F$, we first define the functions respectively computed by the stem and vines. Specifically, we denote the function computed by a vine $V(s, t, i)$ as:
\begin{align}
	F_{V}^{s, t, i}(X) = \sigma^{u,v,i}_{L^{u,v,i}}(A^{u,v,i}_{L^{u,v,i}} \sigma^{u,v,i}_{L^{u,v,i}-1} ( \ldots \sigma_{1}(A^{u,v,i}_{1} X) \ldots )) ~.
\end{align}
Similarly, the stem computes a function as the following equation:
\begin{equation}
	F_{S}(X) = \sigma_{L}(A_{L} \sigma_{L-1} (\ldots \sigma_{1}(A_{1} X) \ldots)) ~.
\end{equation}
Furthermore, we denote the output of the stem at the vertex $N(j)$ as the following equation:
\begin{equation}
	F_{S}^{j}(X) = \sigma_{j}(A_{j} \sigma_{j-1} (\ldots \sigma_{1}(A_{1} X) \ldots)) ~.
\end{equation}
$F_{S}^{j}(X)$ is also the input of the rest part of the stem. Eventually, with all residual connections, the output hypothesis function $F^{j}(X)$ at the vertex $N(j)$ is expressed by the following equation:
\begin{equation}
\label{hypothesisFunctionJ}
	F^{j}(X) = F_{S}^{j}(X) + \sum_{(u,j,i) \in I_{V}} F_{V}^{u,j,i} (X) ~.
\end{equation}
Apparently,
\begin{equation}
	F_{S}(X) = F_{S}^{L}(X),~F(X) = F^{L}(X) ~.
\end{equation}

Naturally, we call this notation system as the {\it stem-vine framework}, and Figure \ref{stemVineFrame} gives an example.

\section{Generalization Bound}
\label{generalizationBound}

In this section, we study the generalization capability of deep neural networks with residual connections and provide a generalization bound for ResNet as an exemplary case. This generalization bound is derived upon the margin-based multi-class bound given by Lemmas \ref{RadBound} and \ref{covNumBound} in Section \ref{preliminary}. Indicated by Lemmas \ref{RadBound} and \ref{covNumBound}, a natural way to approach the generalization bound is to explore the covering number of the corresponding hypothesis space. Motivated by this intuition, we first propose an upper bound of the covering number (or briefly, {\it covering bound}) generally for any deep neural networks under the stem-vine framework. Then, as an exemplary case, we obtain a covering bound for ResNet. Applying Lemmas \ref{RadBound} and \ref{covNumBound}, a generalization bound for ResNet is eventually presented. The proofs for covering bounds will be given in Section \ref{sec:proof}.

As a convention, when we introduce a new structure to boost the training performance (including training accuracy, training time, etc.), we should be very careful to prevent the algorithm from overfitting (which manifests itself as an unacceptably large generalization error). ResNet introduces ``loops'' into chain-like neural networks by residual connections, and therefore becomes a more complex model. Empirical results indicate that the residual connections significantly reduce the training error and accelerate the training speed, while maintains generalization capability at the same time. However, there is so far no theoretical evidence to explain/support the empirical results.

Our result in covering bound indicates that when the total number of weight matrices is fixed, no matter where the weight matrices are (either in the stem or in the vines, and even when there is no vine at all), the complexities of the hypothesis spaces that computed by deep neural networks remain invariant. Combing various classic results in statistical learning theories (Lemmas \ref{RadBound} and \ref{covNumBound}), our results further indicate that the generalization capability of deep neural networks with residual connections could be as equivalently good as the ones without any residual connection at least in the worst cases. Our theoretical result gives an insight into why the deep neural networks with residual connections have equivalently good generalization capability compared with the chain-like ones while having competitive training performance.

\subsection{Covering Bound for Deep Neural Networks with Residuals}
\label{subSecCoverBoundGeneral}

In this subsection, we give a covering bound generally for any deep neural network with residual connections.

\begin{theorem}[Covering Bound for Deep Neural Network]
\label{coverBoundGeneral}
	Suppose a deep neural network is constituted by a stem and a series of vines.
	
	For the stem, let $(\varepsilon_{1}, \ldots, \varepsilon_{L})$ be given, along with $L_{N}$ fixed nonlinearities $(\sigma_{1}, \ldots, \sigma_{L_{N}})$. Suppose the $L$ weight matrices $(A_{1}, \ldots, A_{L})$ lies in $\mathcal B_{1} \times \ldots \times \mathcal B_{L}$, where $\mathcal B_{i}$ is a ball centered at $0$ with radius of $s_{i}$, i.e., $\| A_{i} \| \le s_{i}$. Suppose the vertex that directly follows the weight matrix $A_{i}$ is $N(M(i))$ ($M(i)$ is the index of the vertex). All $M(i)$ constitute an index set $I_{M}$. When the output $F_{M(j-1)}(X)$ of the weight matrix $A_{j-1}$ is fixed, suppose all output hypotheses $F_{M(j)}(X)$ of the weight matrix $A_{j}$ constitute a hypothesis space $\mathcal H_{M(j)}$ with an $\varepsilon_{M(j)}$-cover $\mathcal W_{M(j)}$ with covering number $\mathcal N_{M(j)}$. Specifically, we define $M(0)=0$ and $F_{0}(X) = X$.

Each vine $V(u,v,i)$, $(u,v,i) \in I_{V}$ is also a chain-like neural network that constructed by multiple weight matrices $A^{u,v,i}_{j}$, $j \in \{1, \ldots, L^{u,v,i}\}$, and nonlinearities $\sigma^{u,v,i}_{j}$, $j \in \{1, \ldots, L_{N}^{u,v,i}\}$. Suppose for any weight matrix $A^{u,v,i}_{j}$, there is a $s_{j}^{u,v,i} > 0$ such that $\| A^{u,v,i}_{j} \|_{\sigma} \le s_{j}^{u,v,i}$. Also, all nonlinearities $\sigma^{u,v,i}_{j}$ are Lipschitz continuous. Similar to the stem, when the input of the vine $F_{u}(X)$ is fixed, suppose the vine $V(u, v, i)$ computes a hypothesis space $\mathcal H^{u,v,i}_{V}$, constituted by all hypotheses $F_{V}^{u,v,i}(X)$, has an $\varepsilon_{u,v,i}$-cover $\mathcal W_{V}^{u,v,i}$ with covering number $\mathcal N^{u,v,i}_{V}$.

Eventually, we denote the hypothesis space computed by the neural network is $\mathcal H$. Then there exists an $\varepsilon$ in terms of $\varepsilon_{i}$, $i = \{ 1, \ldots, L \}$ and $\varepsilon_{u,v,i}$, $(u,v,i) \in I_{V}$, such that the following inequality holds:
\begin{align}
\label{formulaCoverBoundGeneral}
	\mathcal N(\mathcal H, \varepsilon, \| \cdot \|) \le \prod_{j = 1}^{L} \sup_{F_{M(j)}} \mathcal N_{M(j+1)} \prod_{(u,v,i) \in I_{V}} \sup_{F_{u}} \mathcal N^{u,v,i}_{V} ~.
\end{align}
\end{theorem}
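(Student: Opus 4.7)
The proof plan is to build a cover of $\mathcal H$ inductively along the topological order of the stem-vine framework, combining covers of individual components (stem weight matrices and whole vines) via a product rule. The underlying principle is standard: if $F_1$ admits an $\varepsilon_1$-cover of size $\mathcal N_1$ and $F_2$ admits an $\varepsilon_2$-cover of size $\mathcal N_2$, then $F_1+F_2$ admits a $(\varepsilon_1+\varepsilon_2)$-cover of size $\mathcal N_1 \mathcal N_2$ (triangle inequality), and composition with an $\rho$-Lipschitz nonlinearity $\sigma$ or with multiplication by a weight matrix of spectral norm $\le s$ scales the cover radius by $\rho$ or $s$ respectively, without changing its cardinality. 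Hence every additive or compositional step of the stem-vine construction multiplies the running covering number by the covering number of the newly-added component.

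Concretely, I would proceed vertex by vertex in increasing index. Initialize with the trivial exact cover of $F^{1}(X)=X$. Walk along the stem: at a vertex $N(M(j))$ that follows a weight matrix $A_{j}$, replace the previous layer's output $F_{M(j)-1}$ by one of its cover elements and then cover $A_{j} F_{M(j)-1}$ by an element of $\mathcal W_{M(j)}$; because $\mathcal W_{M(j)}$ is defined pointwise in the predecessor, every predecessor element in the running cover needs its own child cover, and the worst case over predecessors is captured by the $\sup_{F_{M(j)}}\mathcal N_{M(j+1)}$ factor. At a vertex following a nonlinearity, Lipschitz continuity of $\sigma_{j}$ multiplies the accumulated radius by its Lipschitz constant but does not change the cardinality. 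At a vertex $N(v)$ where vines $V(u,v,i)$ merge into the stem according to eq.~(\ref{hypothesisFunctionJ}), I would form the sum of the current stem cover and the vine covers: for each vine, the hypothesis space $\mathcal H_{V}^{u,v,i}$ depends on the already-covered stem output $F_u$, so the same "worst predecessor" trick gives the factor $\sup_{F_u}\mathcal N_{V}^{u,v,i}$, and additivity of the residual connection means this is a pure product multiplication. Iterating through all $L$ stem vertices and all $(u,v,i)\in I_V$ gives exactly the right-hand side of (\ref{formulaCoverBoundGeneral}).

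The bookkeeping for the $\varepsilon$ parameter is the only delicate part: the radius accumulates as a linear combination of the component radii $\varepsilon_{M(j)}$ and $\varepsilon_{u,v,i}$, weighted by products of downstream spectral norms $s_{i}$ and Lipschitz constants of downstream nonlinearities (for stem components), or by products of spectral norms of internal vine matrices (for vine components). Because the theorem only asserts the existence of \emph{some} $\varepsilon$ expressible in terms of the $\varepsilon_{i}$ and $\varepsilon_{u,v,i}$, I do not need a closed-form expression, only that the recursion terminates in a finite sum. Explicitly, after processing vertex $j$, the radius is of the form $\sum_{k\le j}\varepsilon_{M(k)}\prod_{k<\ell\le j}\rho_\ell s_\ell+\sum_{(u,v,i):v\le j}\varepsilon_{u,v,i}\prod_{v<\ell\le j}\rho_\ell s_\ell$, and setting $\varepsilon$ equal to this final value at $j=L$ closes the argument.

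The main obstacle, and the reason the statement involves suprema over predecessors $F_{M(j)}$ and $F_u$, is that the child covers $\mathcal W_{M(j)}$ and $\mathcal W_{V}^{u,v,i}$ are constructed with the predecessor held fixed. To honestly build a joint cover of $\mathcal H$, one must choose a child cover for every predecessor element in the running cover; taking the worst (largest) cardinality over such predecessors makes the bookkeeping uniform and keeps the product on the right-hand side free of predecessor-dependence. Once this suprema device is in place, the rest is triangle inequality, Lipschitz propagation, and a clean induction, and no further technicality arises.
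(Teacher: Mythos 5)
Your proposal is correct and follows essentially the same route as the paper's own proof: the triangle-inequality product rule for sums at residual merge points, Lipschitz/spectral-norm propagation of the radius, the supremum-over-predecessors device to make the child covers uniform, and an induction along the network, with the explicit form of $\varepsilon$ deferred. The only organizational difference is that the paper factors the stem induction into a separate chain-like lemma and then adds vines one at a time, whereas you interleave the two, which changes nothing substantive.
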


A detailed proof will be given in Section \ref{proofCoverBoundGeneral}.

As vines are chain-like neural networks, we can further obtain an upper bound for $\sup_{F_{u}} \mathcal N^{u,v,i}_{V}$ via a lemma slightly modified from \cite{bartlett2017spectrally}. The lemma is summarised as follows.

\begin{lemma}[Covering Bound for Chain-like Deep Neural Network; cf. \cite{bartlett2017spectrally}, Lemma A.7]
\label{coverChainBoundGeneral}
Suppose there are $L$ weight matrices in a chain-like neural network. Let $(\varepsilon_{1}, \ldots, \varepsilon_{L})$ be given. Suppose the $L$ weight matrices $(A_{1}, \ldots, A_{L})$ lies in $\mathcal B_{1} \times \ldots \times \mathcal B_{L}$, where $\mathcal B_{i}$ is a ball centered at $0$ with the radius of $s_{i}$, i.e., $\mathcal B_{i} = \{A_{i}: \| A_{i} \| \le s_{i}\}$. Furthermore, suppose the input data matrix $X$ is restricted in a ball centred at $0$ with the radius of $B$, i.e., $\| X \| \le B$. Suppose $F$ is a hypothesis function computed by the neural network. If we define:
\begin{equation}
	\mathcal H = \{ F(X): A_{i} \in \mathcal B_{i} \} ~,
\end{equation}
where $i = 1, \ldots, L$ and $t \in \{1, \ldots, L^{u,v,s}\}$. Let $\varepsilon = \sum_{j = 1}^{L}\varepsilon_{j}\rho_{j}\prod_{l = j+1}^{L}\rho_{l}s_{l}$. Then we have the following inequality:
\begin{align}
\label{formulaCoverChainBoundGeneral}
	\mathcal N(\mathcal H, \varepsilon, \| \cdot \|) \le  \prod_{i=1}^{L} \sup_{\mathbf A_{i-1} \in \bm{\mathcal B}_{i-1}} \mathcal N_{i} ~, 
\end{align}
where $\mathbf A_{i-1} = (A_{1}, \ldots, A_{i-1})$, $\bm{\mathcal B}_{i-1} = \mathcal B_{1} \times \ldots \times \mathcal B_{i-1}$, and
\begin{equation}
	\mathcal N_{i}  = \mathcal N \left( \left\{ A_{i}F_{\mathbf A_{i-1}}(X): A_{i} \in \mathcal B_{i} \right\} \varepsilon_{i}, \| \cdot \| \right) ~.
\end{equation}
\end{lemma}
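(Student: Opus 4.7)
The plan is to prove the bound by induction on the depth $L$, or equivalently by a greedy layer-by-layer construction of a cover of $\mathcal H$, adapting the divide-and-conquer strategy in \cite{bartlett2017spectrally}. Given any $F = \sigma_L(A_L \sigma_{L-1}(\cdots \sigma_1(A_1 X)\cdots)) \in \mathcal{H}$, I would walk through the network from input to output and, at each layer $i$, replace $A_i$ by a nearby $\hat{A}_i$ drawn from an $\varepsilon_i$-cover of the set $\{A_i F_{\hat{\mathbf A}_{i-1}}(X) : A_i \in \mathcal{B}_i\}$, which is constructed only after $\hat{\mathbf A}_{i-1}$ has been fixed. The output $\hat{F}$ of the resulting perturbed network serves as the cover point for $F$, and the candidate cover of $\mathcal H$ is the collection of all such $\hat{F}$ obtained as $(\hat{A}_1, \ldots, \hat{A}_L)$ ranges over the successively built layer-wise covers.

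The key technical step is an error-propagation recursion. Write $\Delta_i = \|F^{i}_{\mathbf A_i}(X) - F^{i}_{\hat{\mathbf A}_i}(X)\|$, where $F^i$ denotes the output at layer $i$, with the convention $\Delta_0 = 0$. Using the $\rho_i$-Lipschitz property of $\sigma_i$, the triangle inequality, the sub-multiplicativity $\|A_i F - A_i G\| \le \|A_i\|_\sigma \|F - G\| \le s_i \|F - G\|$, and the defining property of the $\varepsilon_i$-cover, one obtains
\begin{equation*}
\Delta_i \le \rho_i \bigl(s_i \Delta_{i-1} + \varepsilon_i\bigr).
\end{equation*}
Unrolling this recurrence from $i = 1$ through $i = L$ yields $\Delta_L \le \sum_{j=1}^{L} \varepsilon_j \rho_j \prod_{l=j+1}^{L} \rho_l s_l$, which is exactly the prescribed radius $\varepsilon$ in the statement.

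Counting the cover size is then elementary: for each fixed $\hat{\mathbf A}_{i-1}$, the cover at layer $i$ has cardinality at most $\sup_{\mathbf A_{i-1} \in \bm{\mathcal B}_{i-1}} \mathcal{N}_i$, so the total number of leaves in the tree of successive choices is bounded by $\prod_{i=1}^L \sup_{\mathbf A_{i-1}} \mathcal{N}_i$, matching (\ref{formulaCoverChainBoundGeneral}). The only subtle point---and the main obstacle in writing the argument cleanly---is that the layer-$i$ cover depends on the previously chosen $\hat{\mathbf A}_{i-1}$, so no single cover works uniformly across all histories; replacing $\mathcal N_i$ by its supremum over $\mathbf A_{i-1}$ is precisely what yields a history-independent size bound and legitimizes the product across layers. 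The assumption $\|X\| \le B$ is not directly used in this argument but enters implicitly because the sets being covered inherit diameters that are finite only when the input has bounded norm.
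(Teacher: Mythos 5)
Your proposal is correct and follows essentially the same layer-by-layer inductive cover construction as the paper (and Bartlett et al.'s Lemma A.7): build the layer-$i$ cover conditionally on the already-chosen $\hat{\mathbf A}_{i-1}$, take a supremum to make the count history-independent, and propagate the radius through the Lipschitz nonlinearities and spectral norms. Your unrolled recursion $\Delta_i \le \rho_i(s_i\Delta_{i-1}+\varepsilon_i)$ in fact supplies the explicit formula for $\varepsilon$ that the paper's proof states but declines to write out.
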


\begin{remark}
	The mapping induced by a chain-like neural network can be formularized as the composition of a series of affine/nonlinear transformations. The proof of Lemma \ref{coverChainBoundGeneral} thus can decompose the covering bound for a chain-like neural network into the product of the covering bounds for all layers (see a detailed proof in \cite{bartlett2017spectrally}). However, residual connections introduce paralleling structures into neural networks. Therefore, the computed mapping cannot be directly expressed as a series of compositions of affine/nonlinear transformations. Instead, to approach a covering bound for the whole network, we are facing many additions of function spaces (see, eq. (\ref{hypothesisFunctionJ})), where the former results cannot be straightly applied. To address this issue, we provide a novel proof collected in Section \ref{proofCoverBoundGeneral}.
\end{remark}

Contrary to the different proofs, the result for deep neural networks with residual connections share similarities with the one for the chain-like network (see, respectively, eq. (\ref{formulaCoverBoundGeneral}) and eq. (\ref{formulaCoverChainBoundGeneral})). The similarities lead to the property summarised as follows.

\begin{quotation}
{\it The influences on the hypothesis complexity of weight matrices are in the same way, no matter whether they are in the stem or the vines. Specifically, adding an identity vine could not affect the hypothesis complexity of the deep neural network.}
\end{quotation}

As indicated by eq. (\ref{formulaCoverChainBoundGeneral}) in Lemma \ref{coverChainBoundGeneral}, the covering number of the hypothesis computed by a chain-like neural network (including the stem and all the vines) is upper bounded by the product of the covering number of all single layers. Specifically, the contribution of the stem on the covering bound is the product of a series of covering numbers, i.e., $\prod_{j = 1}^{L} \sup_{F_{M(j)}} \mathcal N_{M(j+1)}$. In the meantime, applying eq. (\ref{formulaCoverChainBoundGeneral}) in Lemma \ref{coverChainBoundGeneral}, the contribution $\sup_{F_{u}} \mathcal N^{u,v,i}_{V}$ of the vine $V(u,v,i)$ can also be decomposed as the product of a series of covering numbers. Apparently, the contributions respectively by the weight matrices in the stem and the ones in the vines have similar formulations. This result gives an insight that residuals would not undermine the generalization capability of deep neural networks. Also, if a vine $V(u,v,i)$ is an identity mapping, the term in eq. (\ref{formulaCoverBoundGeneral}) that relates to it is definitely $1$, i.e., $\mathcal N^{u,v,i}_{V} = 1$. This is because there is no parameter to tune in an identity vine. This result gives an insight that adding an identity vine to a neural network would not affect the hypothesis complexity.

However, it is worth noting that the vines could influence the part of the stem in the covering bound, i.e., $\mathcal N_{M(j+1)}$ in eq. (\ref{formulaCoverBoundGeneral}). The mechanism of the cross-influence between the stem and the vines is an open problem.

\subsection{Covering Bound for ResNet}
\label{subSecCovBoundResNet}

As an example, we analyze the generalization capability of the $34$-layer ResNet. Analysis of other deep neural networks under the stem-vine framework is similar. For the convenience, we give a detailed illustration of the $34$-layer ResNet under the stem-vine framework in Figure \ref{ResNetStemVineFramework}.

\begin{figure}
\begin{center}
\includegraphics[width=2.9in]{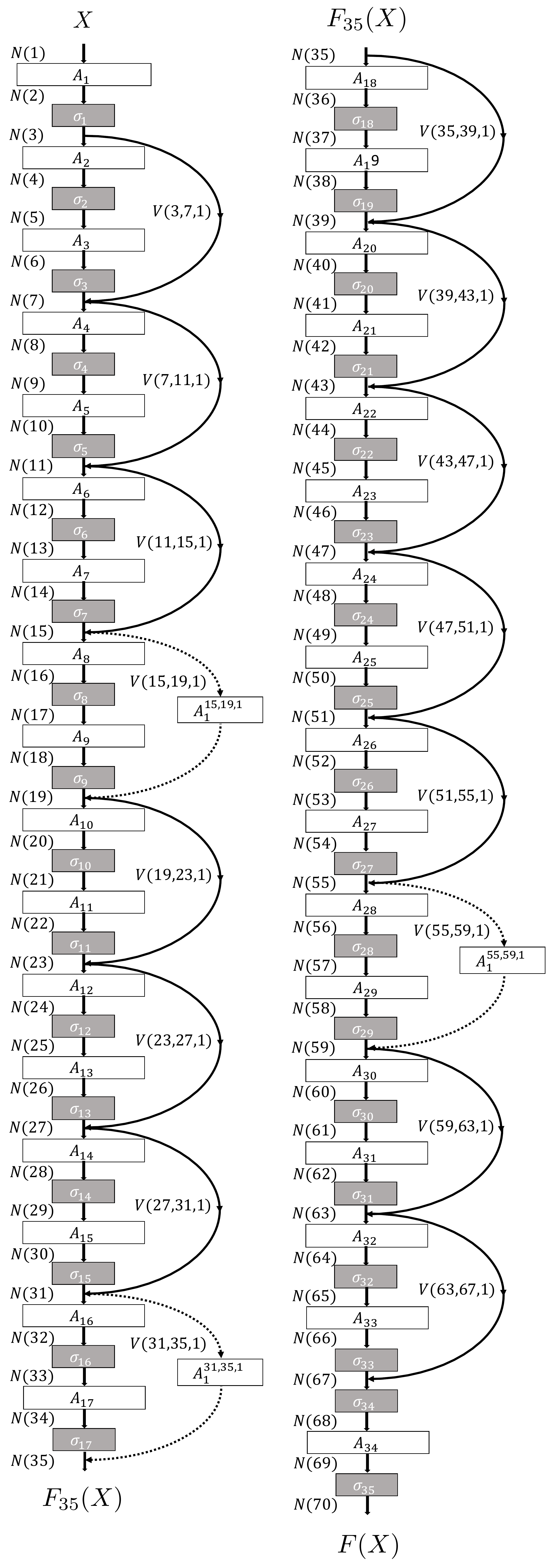}%
\caption{The $34$-layer ResNet under the Stem-Vine Framework.}
\label{ResNetStemVineFramework}
\end{center}
\end{figure}

There are one $34$-layer stem and $16$ vines in the $34$-layer ResNet. Each layer in the stem contains one weight matrix and several Lipschitz-continuous nonlinearities. For most layers with over one nonlinearity, the multiple nonlinearities are connected one by one directly; we merge the nonlinearities as one single nonlinearity. However, the vine links the stem at a vertex between two nonlinearities after the $33$-th weight matrix, and thus we cannot merge the two nonlinearities. Hence, the stem of ResNet can be expressed as follows:
\begin{equation}
	S_{res} = (A_{1}, \sigma_{1}, \ldots, A_{33}, \sigma_{33}, \sigma_{34}, A_{34}, \sigma_{35}) ~.
\end{equation}
From the vertex that receives the input data to the vertex that outputs classification functions, there are $34 + 35 + 1 = 70$ vertexes ($34$ is the number of weight matrices and $35$ is the number of nonlinearities). We denote them as $N(1)$ to $N(70)$. Additionally, we assume the norm of the the weight matrix $A_{i}$ has an upper bound $s_{i}$, i.e., $\| A_{i} \|_{\sigma} \le s_{i}$, while the Lipschitz constant of the nonlinearity $\sigma_{i}$ is denoted as $b_{i}$.

Under the stem-vine framework, the $16$ vines in ResNet are respectively denoted as $V(3, 7, 1), V(7, 11, 1), \ldots, V(63, 67, 1)$. Among these $16$ vines, there are $3$ vines, $V(15, 19, 1)$, $V(31, 35, 1)$, and $V(55, 59, 1)$, that respectively contains one weight matrix, while all others are identity mappings. Let's denote the weight matrices in the vines $V(15, 19, 1)$, $V(31, 35, 1)$, and $V(55, 59, 1)$ respectively as $A^{15,19, 1}_{1}$, $A^{31, 35, 1}_{1}$, and $A^{55, 59, 1}_{1}$. Suppose the norms of $A^{15,19, 1}_{1}$, $A^{31, 35, 1}_{1}$, and $A^{55, 59, 1}_{1}$ are respectively upper bounded by $s^{15,19, 1}_{1}$, $s^{31, 35, 1}_{1}$, and $s^{55, 59, 1}_{1}$. Denote the reference matrices that correspond to weight matrices $(A_{1}, \ldots, A_{34})$ as $(M_{1}, \ldots, M_{34})$. Suppose the distance between each weight matrix $A_{i}$ and the corresponding reference matrix $M_{i}$ is upper bounded by $b_{i}$, i.e., $\| A_{i}^{T} - M_{i}^{T} \| \le b_{i}$. Similarly, suppose there are reference matrices $M_{1}^{s, t, 1}, ~ (s, t) \in \{ (15, 19), (31, 35), (55, 59) \}$ respectively for weight matrices $A^{s, t, 1}_{1}$, and the distance between $A^{s, t}_{1}$ and $M^{s, t, 1}_{1}$ is upper bounded by $b^{s, t, 1}_{1}$, i.e., $\| (A_{i}^{s, t, 1})^{T} - (M_{i}^{s, t, 1})^{T} \| \le b_{1}^{s, t, 1}$. We then have the following lemma.

\begin{lemma}[Covering Number Bound for ResNet]
\label{covBoundResNet}
	For a ResNet $R$ satisfies all conditions above, suppose the hypothesis space is $\mathcal H_{R}$. Then, we have
	\begin{align}
	\label{formulaCovBoundResNet}
		\log \mathcal N(\mathcal H_{R}, \varepsilon, \|\cdot\|) \le & \sum_{u \in \{ 15, 31, 55 \}} \frac{(b^{u,u+4,1}_{1})^{2} \| F_{u}(X^{T})^{T} \|_{2}^{2}}{\varepsilon_{u,u+4,1}^{2}} \log(2W^{2}) \nonumber\\
		& + \sum_{j=1}^{34} \frac{b_{j}^{2} \| F_{2j-1}(X^{T})^{T} \|_{2}^{2}}{\varepsilon_{2j+1}^{2}} \log(2W^{2}) \nonumber\\
		& + \frac{b_{34}^{2} \| F_{68}(X^{T})^{T} \|_{2}^{2}}{\varepsilon_{70}^{2}} \log(2W^{2}) ~,
	\end{align}
	where $\mathcal N(\mathcal H_{R}, \varepsilon, \|\cdot\|)$ is the $\varepsilon$-covering number of $\mathcal H_{R}$. When $j = 1, \ldots, 16$,
	\begin{align}
		\| F_{4j+1} (X) \|_{2}^{2} \le & \| X \|^{2} \rho_{1}^{2} s_{1}^{2} \rho_{2j}^{2} s_{2j}^{2} \prod_{\substack{1 \le i \le j-1 \\ i \notin \{4, 8, 14\}}} \left( \rho_{2i}^{2} s_{2i}^{2} \rho_{2i+1}^{2} s_{2i+1}^{2} + 1 \right) \nonumber\\
		& \prod_{\substack{1 \le i \le j-1 \\ i \in \{4, 8, 14\}}} \left[ \rho_{2i}^{2} s_{2i}^{2} \rho_{2i+1}^{2} s_{2i+1}^{2} + (s^{4i-1, 4i+3, 1}_{1})^{2} \right] ~,
	\end{align}
	and
	\begin{align}
		\| F_{4j+3} (X) \|^{2}_{2} \le & \| X \|^{2} \rho_{1}^{2} s_{1}^{2} \prod_{\substack{1 \le i \le j \\ i \notin \{4, 8, 14\}}} \left( \rho_{2i}^{2} s_{2i}^{2} \rho_{2i+1}^{2} s_{2i+1}^{2} + 1 \right) \nonumber\\
		& \prod_{\substack{1 \le i \le j \\ i \in \{4, 8, 14\}}} \left[ \rho_{2i}^{2} s_{2i}^{2} \rho_{2i+1}^{2} s_{2i+1}^{2} + (s^{4i-1, 4i+3, 1}_{1})^{2} \right] ~,
	\end{align}
	and specifically,
	\begin{align}
		\| F_{68}(X^{T})^{T} \|_{2}^{2} \le & \| X \|^{2} \rho_{1}^{2} s_{1}^{2} \rho_{34}^{2} \prod_{\substack{1 \le i \le 16 \\ i \notin \{4, 8, 14\}}} \left( \rho_{2i}^{2} s_{2i}^{2} \rho_{2i+1}^{2} s_{2i+1}^{2} + 1 \right) \nonumber\\
		& \prod_{\substack{1 \le i \le 16 \\ i \in \{4, 8, 14\}}} \left[ \rho_{2i}^{2} s_{2i}^{2} \rho_{2i+1}^{2} s_{2i+1}^{2} + (s^{4i-1, 4i+3, 1}_{1})^{2} \right] ~.
	\end{align}
	Also, when $j = 1, \ldots, 16$,
	\begin{align}
		\varepsilon_{4j+1} = & (1 + s_{1})\rho_{1}(1 + s_{2j})\rho_{2j} \prod_{\substack{1 \le i \le j-1 \\ i \notin \{ 4, 8, 14 \}}} \left[ \rho_{2i}(s_{2i} + 1)\rho_{2i+1}(s_{2i+1} + 1) + 1 \right] \nonumber\\
		& \prod_{\substack{1 \le i \le j-1 \\ i \in \{ 4, 8, 14 \}}} \left[ \rho_{2i}(s_{2i} + 1)\rho_{2i+1}(s_{2i+1} + 1) + 1 + s^{4i-1, 4i+3, 1}_{1} \right] ~,
	\end{align}
	and
	\begin{align}
		\varepsilon_{4j+3} = & (1 + s_{1})\rho_{1} \prod_{\substack{1 \le i \le j \\ i \notin \{ 4, 8, 14 \}}} \left[ \rho_{2i}(s_{2i} + 1)\rho_{2i+1}(s_{2i+1} + 1) + 1 \right] \nonumber\\
		& \prod_{\substack{1 \le i \le j \\ i \in \{ 4, 8, 14 \}}} \left[ \rho_{2i}(s_{2i} + 1)\rho_{2i+1}(s_{2i+1} + 1) + 1 + s^{4i-1, 4i+3, 1}_{1} \right] ~,
	\end{align}	
	and for $u = 15, 31, 55$,
	\begin{align}
		\varepsilon_{u, u+4, 1} = \varepsilon_{u} \left(1+s^{u,u+4,1}_{1}\right) ~.
	\end{align}
	In above equations/inequalities,
	\begin{align}
	\bar \alpha = & (s_{1}+1)\rho_{1}\rho_{34}(s_{34}+1)\rho_{35} \prod_{\substack{1 \le i \le 16 \\ i \notin \{ 4,8,14 \}}} \left[ \rho_{2i}(s_{2i} + 1)\rho_{2i+1}(s_{2i+1} + 1) + 1 \right]\nonumber\\
	& \prod_{i \in \{ 4,8,14 \}} \left[ \rho_{2i}(s_{2i} + 1)\rho_{2i+1}(s_{2i+1} + 1) + s^{4i-1,4i+3,1}_{1} + 1 \right] ~.
	\end{align}
\end{lemma}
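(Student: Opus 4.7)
The plan is to instantiate the general covering bound of Theorem~\ref{coverBoundGeneral} on the specific stem--vine topology of the 34-layer ResNet depicted in Figure~\ref{ResNetStemVineFramework}, reduce every factor to a single-matrix covering problem, and then recursively control the data-norm and error-propagation quantities that appear in each factor. First I would apply Theorem~\ref{coverBoundGeneral} to decompose $\mathcal N(\mathcal H_R,\varepsilon,\|\cdot\|)$ as a product of per-layer covering numbers for the 34 stem weight matrices together with per-vine covering numbers for the 16 vines. Of these 16 vines, 13 are identity mappings, so they carry no parameters and contribute a trivial factor of $1$; the remaining three vines $V(15,19,1),V(31,35,1),V(55,59,1)$ each consist of a single weight matrix and can be handled by Lemma~\ref{coverChainBoundGeneral} with $L=1$. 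Thus the whole product reduces to $34+3=37$ single-matrix covering numbers. Each is bounded by the standard Maurey-style estimate used in~\cite{bartlett2017spectrally}, namely $\log\mathcal N(\{AF:\|A^T-M^T\|\le b\},\varepsilon,\|\cdot\|)\le (b^2\|F(X^T)^T\|_2^2/\varepsilon^2)\log(2W^2)$. Taking logarithms turns the product into the three summations in \eqref{formulaCovBoundResNet}: one over the three matrix-carrying vines, one over the 34 stem weight matrices treated as $A_j$ acting on $F_{2j-1}$, and a separate final term for $A_{34}$ which, owing to the extra nonlinearity $\sigma_{34}$ inserted before it, must be indexed at vertex $N(68)$ rather than at the position of $F_{2j-1}$.

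Second, I would bound the norms $\|F_j(X^T)^T\|_2$ appearing in each summand by induction along the stem. On a pure stem edge I use $\|\sigma_j(A_jF)\|_2\le \rho_j s_j\|F\|_2$. At a vertex where a vine rejoins the stem I expand $F^{v}(X)=F_S^{v}(X)+F_V^{u,v,1}(X)$ and apply $(a+b)^2\le(\|a\|_2+\|b\|_2)^2\le 2(\|a\|_2^2+\|b\|_2^2)$ (or the sharper two-vector Cauchy--Schwarz that gives exactly the quoted constant), which produces an additive $+1$ at an identity-vine junction (since then $F_V=F_{\text{input}}$ of the block, whose squared norm has already been carried in the induction hypothesis) and an additive $+(s^{4i-1,4i+3,1}_1)^2$ at the three matrix-vine junctions $i\in\{4,8,14\}$. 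Iterating this recursion across all sixteen residual blocks yields the claimed formulas for $\|F_{4j+1}\|_2^2$, $\|F_{4j+3}\|_2^2$, and in particular $\|F_{68}\|_2^2$.

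Third, I would determine the $\varepsilon_j$ that appear in each per-layer covering radius by tracking how a covering-level perturbation at an interior layer amplifies as it propagates to the output. The basic rule is the same one used in the proof of Lemma~\ref{coverChainBoundGeneral}: a perturbation $\delta F$ at the input of a layer $\sigma(A\cdot)$ becomes at most $\rho(s\cdot\|\delta F\|+\|F\|\cdot\varepsilon_{\text{local}})$ at its output, while at a residual junction two independent perturbations (stem and vine) combine additively. Writing this recursion along the stem with the same case split as in step~2 produces exactly the $\rho_{2i}(s_{2i}+1)\rho_{2i+1}(s_{2i+1}+1)+1$ factors for identity blocks and $\rho_{2i}(s_{2i}+1)\rho_{2i+1}(s_{2i+1}+1)+1+s^{4i-1,4i+3,1}_1$ factors for matrix-vine blocks, and the companion identity $\varepsilon_{u,u+4,1}=\varepsilon_u(1+s^{u,u+4,1}_1)$ for the three non-identity vines. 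Substituting the norm estimates from step~2 and the $\varepsilon_j$ from step~3 into the per-layer Maurey bounds from step~1 assembles~\eqref{formulaCovBoundResNet}; the quantity $\bar\alpha$ is just the common overall scale factor $\varepsilon/\varepsilon_{\text{layer}}$ that arises when one expresses the global radius $\varepsilon$ in terms of the local radii.

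The main obstacle is the bookkeeping at residual junctions, where one must simultaneously track how the squared state-norm compounds under the stem-plus-vine addition and how two independent covering errors (one accumulated along the stem segment of the block, one along the vine) combine additively at the junction and are then amplified by the subsequent Lipschitz/spectral factors all the way to the output. The heterogeneity between the 13 identity blocks and the 3 matrix-vine blocks, together with the fact that the last weight matrix $A_{34}$ sits after two consecutive nonlinearities $\sigma_{33}\sigma_{34}$ and so is not part of any residual block, forces three separate cases in both recursions; matching them against the three summations on the right-hand side of \eqref{formulaCovBoundResNet} is where the proof becomes notation-heavy but is otherwise a direct specialization of the general machinery established in Theorem~\ref{coverBoundGeneral} and Lemma~\ref{coverChainBoundGeneral}.
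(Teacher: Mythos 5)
Your proposal follows essentially the same route as the paper's own proof: instantiate Theorem~\ref{coverBoundGeneral} on the $34$-layer stem with $13$ identity vines and $3$ single-matrix vines, bound each of the resulting single-matrix factors with the Maurey estimate of Lemma~\ref{matrixCover}, and then run two inductions along the stem --- one for the feature norms $\| F_{j}(X^{T})^{T} \|_{2}$ and one for the radii $\varepsilon_{j}$ --- with the same case split between identity and matrix-carrying residual junctions. The one soft spot you hedge on, namely passing from the triangle-inequality recursion $\| F_{2j+1} \|_{2} \le (\rho_{j}\rho_{j-1}s_{j}s_{j-1} + s^{u,v,1}_{1})\| F_{2j-3} \|_{2}$ to the stated sums of squares inside the products, is glossed over in exactly the same way by the paper itself, so it does not constitute a departure from its argument.
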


A detailed proof is omitted and will be given in Section \ref{proofCoverBoundGeneral}.

\subsection{Generalization Bound for ResNet}
\label{subGeneralizationBoundResNet}

Lemmas \ref{RadBound} and \ref{covNumBound} guarantee that when the covering number of a hypothesis space is upper bounded, the corresponding generalization error is upper bounded. Therefore, combining the covering bound for ResNet given by Lemma \ref{covBoundResNet}, a generalization bound for ResNet is straight-forward. In this subsection, the generalization bound is summarized as Theorem \ref{generalizartionBoundResNet}.

For the brevity, we rewrite the radius $\varepsilon_{2j+1}$ and $\varepsilon_{u,u+4,1}$ as follows:
\begin{gather}
	\varepsilon_{2j+1} = \hat{\varepsilon}_{2j+1}~,\\
	\varepsilon_{u,u+4,1} = \hat{\varepsilon}_{u,u+4,1} \varepsilon ~.
\end{gather}
Additionally, we rewrite eq. (\ref{formulaCovBoundResNet}) of Lemma \ref{covBoundResNet} as the following inequality:
\begin{equation}
\label{reFormulaCovBoundResNet}
	\log \mathcal N(\mathcal H, \varepsilon, \|\cdot\|) \le \frac{R}{\varepsilon^{2}} ~,
\end{equation}
where
\begin{align}
	\label{defR}
		R = & \sum_{u \in \{ 15, 31, 55 \}} \frac{(b^{u,u+4,1}_{1})^{2} \| F_{u}(X^{T})^{T} \|_{2}^{2}}{\hat\varepsilon_{u,u+4,1}^{2}} \log(2W^{2}) \nonumber\\
	& + \sum_{j=1}^{33} \frac{b_{j}^{2} \| F_{2j-1}(X^{T})^{T} \|_{2}^{2}}{\hat\varepsilon_{2j+1}^{2}} \log(2W^{2}) \nonumber\\
	& + \frac{b_{34}^{2} \| F_{68}(X^{T})^{T} \|_{2}^{2}}{\hat\varepsilon_{70}^{2}} \log(2W^{2}) ~,
	\end{align}
Then, we can obtain the following theorem.

\begin{theorem}[Generalization Bound for ResNet]
\label{generalizartionBoundResNet}

Suppose a ResNet satisfies all conditions in Lemma \ref{covBoundResNet}. Suppose a given series of examples $(x_{1}, y_{1}), \ldots, (x_{n}, y_{n})$ are arbitrary independent and identically distributed (iid) variables drawn from any distribution over $\mathcal R^{n_{0}} \times \{ 1, \ldots, n_{L} \}$. Suppose hypothesis function $F_{\mathcal A}: \mathbb R^{n_{0}} \to \mathbb R^{n_{L}}$ is computed by a ResNet with weight matrices $\mathcal A = (A_{1}, \ldots, A_{34}, A^{15,19,1}_{1}, A^{31,35,1}_{1}, A^{55,59,1}_{1})$. Then for any margin $\lambda > 0$ and any real $\delta \in (0, 1)$, with probability at least $1 - \delta$, we have the following inequality:
\begin{align}
\label{generalizationBoundResNet}
	\Pr\{ \arg\max_{i} F(x)_{i} \ne y \} \le \hat{\mathcal R}_{\lambda}(F) + \frac{8}{n^{\frac{3}{2}}} + \frac{36}{n} \sqrt{R} \log n + 3 \sqrt{\frac{\log(1/\delta)}{2n}} ~,
\end{align}
where $R$ is defined as eq. (\ref{defR}).
\end{theorem}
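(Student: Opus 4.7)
The plan is to chain together the three ingredients the paper has already set up: the Rademacher-based margin bound of Lemma \ref{RadBound}, the Dudley-integral covering bound of Lemma \ref{covNumBound}, and the explicit covering estimate for the ResNet hypothesis class of Lemma \ref{covBoundResNet}, and then to optimize the free scale parameter to read off the constants $8/n^{3/2}$ and $36/n$ in the stated bound.

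First I would invoke Lemma \ref{RadBound} with $\mathcal H$ equal to the ResNet hypothesis class, so that with probability at least $1-\delta$ the generalization gap $\Pr\{\arg\max_i F(x)_i\neq y\}-\hat{\mathcal R}_\lambda(F)$ is bounded by $2\hat{\mathfrak R}(\mathcal H_\lambda|_D)+3\sqrt{\log(1/\delta)/(2n)}$, which already accounts for the last term of \eqref{generalizationBoundResNet}. Next I would bound $\hat{\mathfrak R}(\mathcal H_\lambda|_D)$ by Lemma \ref{covNumBound}, reducing the problem to controlling $\mathcal N(\mathcal H_\lambda|_D,\varepsilon,\|\cdot\|_2)$. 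A standard Lipschitz-contraction step is needed here: the margin operator $\mathcal M$ is $2$-Lipschitz and the ramp $l_\lambda$ is $(1/\lambda)$-Lipschitz, so an $\varepsilon$-cover of $\mathcal H|_D$ induces an $\varepsilon$-cover of $\mathcal H_\lambda|_D$ up to a harmless rescaling, which lets me replace $\mathcal N(\mathcal H_\lambda|_D,\varepsilon,\|\cdot\|_2)$ by the $\mathcal N(\mathcal H_R,\varepsilon,\|\cdot\|)$ bounded in Lemma \ref{covBoundResNet}.

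I would then substitute the compact form \eqref{reFormulaCovBoundResNet}, namely $\log\mathcal N(\mathcal H_R,\varepsilon,\|\cdot\|)\le R/\varepsilon^{2}$ with $R$ as in \eqref{defR}, into the Dudley integral. The integrand collapses to $\sqrt{R}/\varepsilon$, giving
\begin{equation*}
\int_{\alpha}^{\sqrt n}\sqrt{\log\mathcal N(\mathcal H_\lambda|_D,\varepsilon,\|\cdot\|_2)}\,\mathrm d\varepsilon\;\le\;\sqrt{R}\,\log\!\bigl(\sqrt n/\alpha\bigr).
\end{equation*}
Plugging this into Lemma \ref{covNumBound} yields $\hat{\mathfrak R}(\mathcal H_\lambda|_D)\le \inf_{\alpha>0}\bigl(4\alpha/\sqrt n+(12/n)\sqrt R\,\log(\sqrt n/\alpha)\bigr)$. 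Choosing $\alpha=1/n$ gives $4\alpha/\sqrt n=4/n^{3/2}$ and $(12/n)\sqrt R\,\log(n^{3/2})=(18\sqrt R\log n)/n$; multiplying by the factor $2$ from Lemma \ref{RadBound} produces exactly the $8/n^{3/2}$ and $36\sqrt R\,\log n/n$ terms of \eqref{generalizationBoundResNet}, and adding back the confidence term completes the bound.

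The only non-bookkeeping step is the passage from $\mathcal N(\mathcal H_R,\cdot)$ to $\mathcal N(\mathcal H_\lambda|_D,\cdot)$, since $\mathcal H_\lambda$ evaluates each hypothesis through the ramp-margin loss on the data rather than as a free function; I would discharge it by the Lipschitz contraction argument described above, absorbing the constants into the $\hat\varepsilon_{\cdot}$ reparametrization that the paper already introduced before stating the theorem, so that the form $R/\varepsilon^{2}$ applies directly. Everything else is evaluating an elementary integral and optimizing one scalar.
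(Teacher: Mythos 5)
Your proposal follows essentially the same route as the paper's own proof: substitute the compact covering bound $\log\mathcal N(\mathcal H,\varepsilon,\|\cdot\|)\le R/\varepsilon^{2}$ into the Dudley integral of Lemma \ref{covNumBound}, evaluate it to $\sqrt{R}\log(\sqrt n/\alpha)$, choose $\alpha=1/n$ to obtain $4/n^{3/2}+(18/n)\sqrt R\log n$, and double via Lemma \ref{RadBound} before adding the confidence term. If anything you are more careful than the paper, which silently identifies the covering number of the ramp-loss class $\mathcal H_{\lambda}|_{D}$ with that of the raw hypothesis class $\mathcal H_{R}$; your explicit Lipschitz-contraction step for $l_{\lambda}\circ(-\mathcal M)$ is precisely the justification that the paper's proof omits.
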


A proof is omitted here and will be given in Section \ref{sec:generalizartionBoundResNet}.

Indicated by Theorem \ref{generalizartionBoundResNet}, the generalization bound of ResNet relies on its covering bound. Specifically, when the sample size $n$ and the probability $\delta$ are fixed, the generalization error satisfies that
\begin{align}
\label{insightR}
	\Pr\{ \arg\max_{i} F(x)_{i} \ne y \} - \hat{\mathcal R}_{\lambda}(F) = \mathcal O\left(\sqrt{R}\right) ~,
\end{align}
where $R$ expresses the magnitude of the covering number ($R/\varepsilon^{2}$ is an $\varepsilon$-covering bound).
Combining the property generally for any neural network under the stem-vine framework, eq. (\ref{insightR}) gives two insights about the effects of residual connections on the generalization capability of neural networks: (1) The influences of weight matrices on the generalization capability are invariant, no matter where they are (either in the stem or in the vines); (2) Adding an identity vine could not affect the generalization. These results give an theoretical explanation of why ResNet has equivalently good generalization capability as the chain-like neural networks.

As indicated by eq. (\ref{generalizationBoundResNet}), the expected risk (or, equivalently, the expectation of the test error) of ResNet equals the sum of the empirical risk (or, equivalently, the training error) and the generalization error. In the meantime, residual connections significantly reduce the training error of the neural network in many tasks. Our results therefore theoretically explain why ResNet has a significantly lower test error in these tasks.

\subsection{Practical Implementation}
\label{implication}

\begin{figure}
\centering
\subfigure[$0$, $128$, $7.37\%$]{\includegraphics[width = 0.4\linewidth]{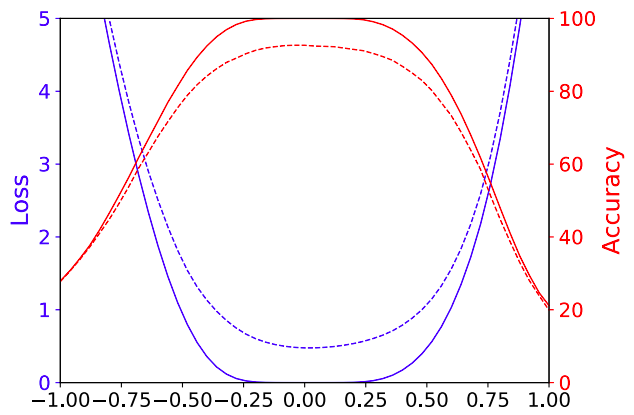}} \qquad
\subfigure[$5 \times 10^{-4}$, $128$, $6.00\%$]{\includegraphics[width = 0.41\linewidth]{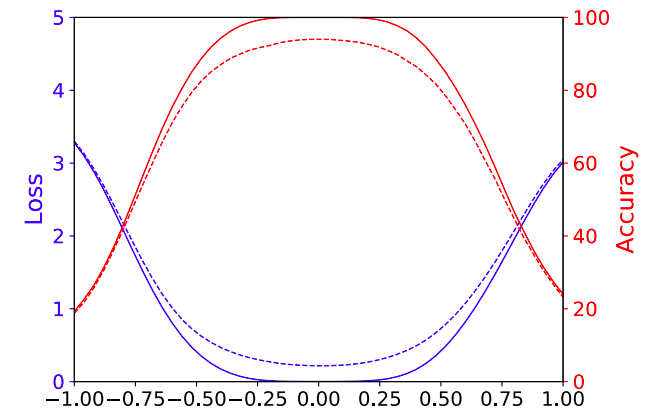}} \\
\subfigure[$0$, $128$, $7.37\%$]{\includegraphics[width = 0.36\linewidth]{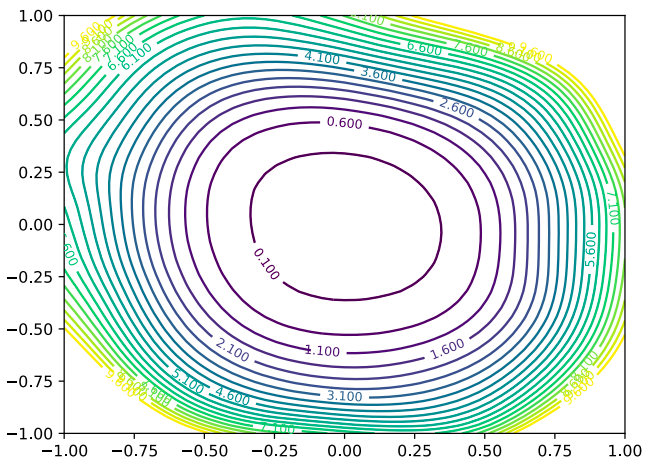}} \qquad
\subfigure[$5 \times 10^{-4}$, $128$, $6.00\%$]{\includegraphics[width = 0.389\linewidth]{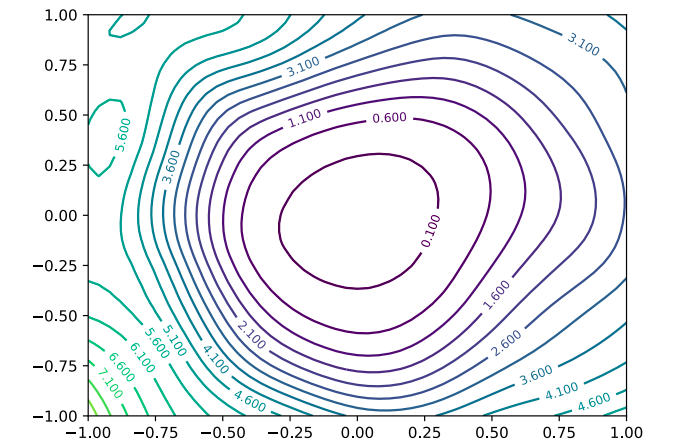}} \\
\subfigure[$0$, $8192$, $11.07\%$]{\includegraphics[width = 0.41\linewidth]{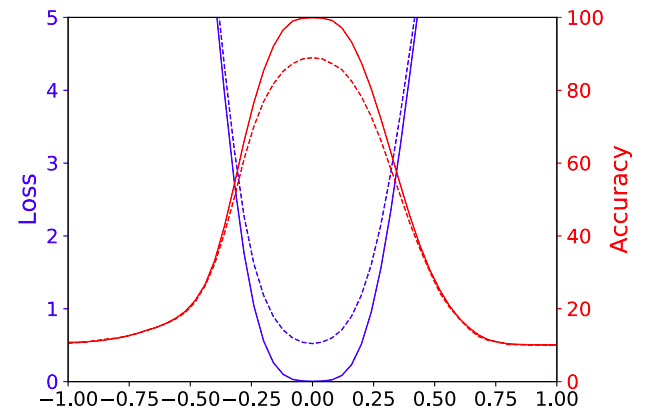}} \qquad
\subfigure[$5 \times 10^{-4}$, $8192$, $10.19\%$]{\includegraphics[width = 0.4\linewidth]{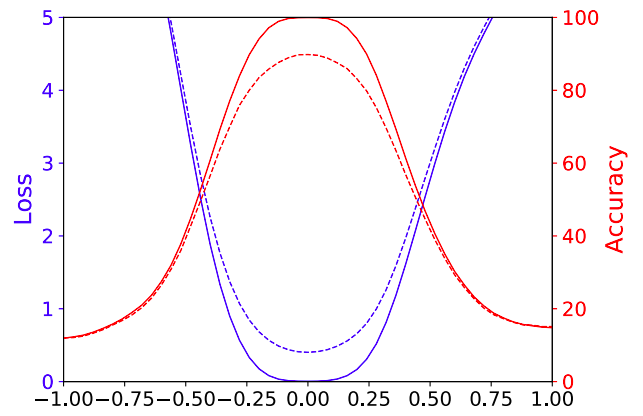}}\\
\subfigure[$0$, $8192$, $11.07\%$]{\includegraphics[width = 0.385\linewidth]{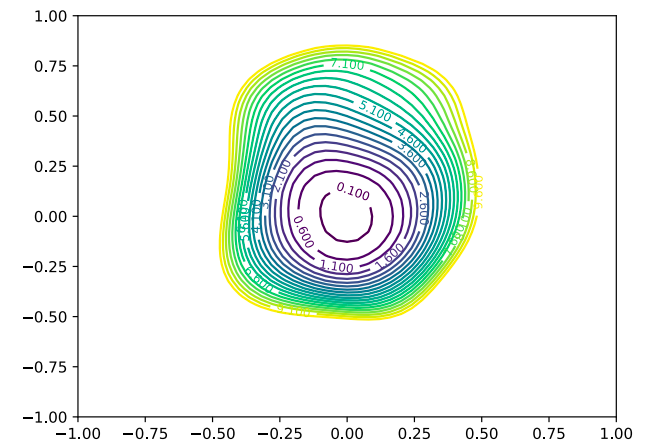}} \qquad
\subfigure[$5 \times 10^{-4}$, $8192$, $10.19\%$]{\includegraphics[width = 0.385\linewidth]{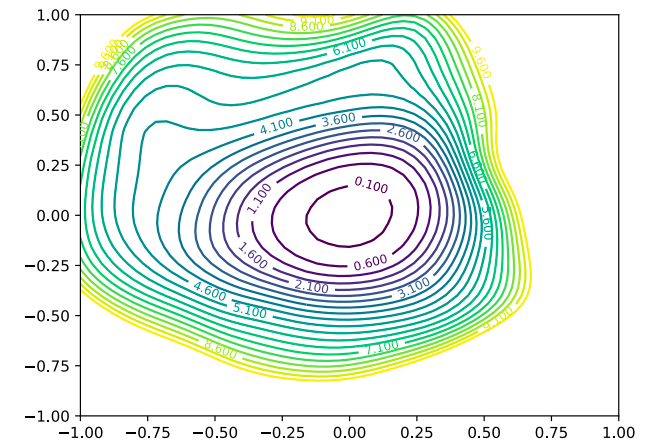}} \\
\caption{Illustrations of the 1D and 2D visualization of the loss surface around the solutions obtained with different weight decay and batch size. The numbers in the title of each subfigure is respectively the parameter of weight decay, batch size, and test error. The data and figures are originally presented in \cite{li2018visualizing}.}
\label{fig:weight_decay}
\end{figure}

Besides the sample size $N$, our generalization bound (eq. (\ref{generalizationBoundResNet})) has a positive correlation with the norms of all the weight matrices. Specifically, weight matrices with higher norms lead to a higher generalization bound of the neural network, and therefore leads to a worse generalization ability. This feature induces a practical implementation which justifies the standard of technique weight decay.

Weight decay can be dated back to a paper by Krogh and Hertz \cite{krogh1992simple} and is widely used in training deep neural networks. It uses the $L_{2}$ norm of all the weights as a regularization term to control the magnitude of the norms of the weights not to increase too much:

\begin{remark}
	The technique of weight decay can improve the generalization ability of deep neural networks. It refers to adding the $L_{2}$ norm of the weights $w = (w_{1}, \ldots, w_{D})$ to the objective function as a regularization term:
\begin{equation*}
	\mathcal L'(w) = \mathcal L(w) + \frac{1}{2} \lambda \sum_{i = 1}^{D} w_{i}^{2} ~,
\end{equation*}
where $\lambda$ is a tuneable parameter, $\mathcal L(w)$ is the original objective function, and $\mathcal L'(w)$ is the objective function with weight decay.
\end{remark}

The term $\frac{1}{2} \lambda \sum_{i = 1}^{D} w_{i}^{2}$ can be easily re-expressed by the $L_{2}$ norms of all the weight matrices. Therefore, using weight decay can control the magnitude of the norms of all the weights matrices not to increase too much. Also, our generalization bound (eq. (\ref{generalizationBoundResNet})) provides a positive correlation between the generalization bound and the norms of all the weight matrices. Thus, our work gives a justification for why weight decay leads to a better generalization ability.

A recent systematic experiment conducted by Li et al. studies the influence of weight decay on the loss surface of the deep neural networks \cite{li2018visualizing}. It trains a 9-layer VGGNet \cite{chen2018deeplab} on the dataset CIFAR-10 \cite{krizhevsky2009learning} by employing stochastic gradient descent with batch sizes of $128$ ($0.26\%$ of the training set of CIFAR-10) and $8192$ ($16.28\%$ of the training set of CIFAR-10). The results demonstrate that by employing weight decay, SGD can find flatter minima\footnote{The flatness (or equivalently sharpness) of the loss surface around the minima is considered as an important index expressing the generalization ability. However, the mechanism still remains elusive. For more details, please refers to \cite{keskar2016large} and \cite{dinh2017sharp}.} of the loss surface with lower test errors as shown in fig. \ref{fig:weight_decay} (original presented as \cite{li2018visualizing}, p. 6, fig. 3). Other technical advances and empirical analysis include \cite{galloway2018adversarial, zhang2018three, 
chen2018closing, epark2019bayesian}.

\section{Proofs}
\label{sec:proof}

This appendix collects various proofs omitted from Section \ref{generalizationBound}. We first give a proof of the covering bound for an affine transformation induced by a single weight matrix. It is the foundation of the other proofs. Then, we provide a proof of the covering bound for deep neural networks under the stem-vine framework (Theorem \ref{coverBoundGeneral}). Furthermore, we present a proof of the covering bound for ResNet (Lemma \ref{covBoundResNet}). Eventually, we provide a proof of the generalization bound for ResNet (Theorem \ref{generalizartionBoundResNet}).

\subsection{Proof of the Covering Bound for the Hypothesis Space of a Single Weight Matrix}

In this subsection, we provide an upper bound for the covering number of the hypothesis space induced by a single weight matrix $A$. This covering bound relies on {\it Maurey sparsification lemma} \cite{pisier1981remarques} and has been introduced in machine learning by previous works (see, e.g.,\cite{zhang2002covering,bartlett2017spectrally}).

Suppose a data matrix $X$ is the input of a weight matrix $A$. All possible values of the output $XA$ constitute a space. We use the following lemma to express the complexity of all $XA$ via the covering number.
\begin{lemma}[Bartlett et al.; see \cite{bartlett2017spectrally}, Lemma 3.2]
\label{matrixCover}
	Let conjugate exponents $(p, q)$ and $(r, s)$ be given with $p \le 2$, as well as positive reals $(a, b, \varepsilon)$ and positive integer $m$. Let matrix $X \in \mathbb R^{n \times d}$ be given with $\| X \|_{p} \le b$. Let $\mathcal H_{A}$ denote the family of matrices obtained by evaluating $X$ with all choices of matrix $A$:
	\begin{equation}
		\mathcal H_{A} \triangleq \left\{ XA | A \in \mathbb R^{d \times m}, \|A\|_{q, s} \le a \right\} ~.
	\end{equation}
	Then
	\begin{equation}
		\log \mathcal N \left( \mathcal H_{A}, \varepsilon, \| \cdot \|_{2} \right) \le \ceil*{\frac{a^{2}b^{2}m^{2/r}}{\varepsilon^{2}}} \log(2dm) ~.
	\end{equation}
\end{lemma}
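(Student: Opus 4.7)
The plan is to prove this via the classical Maurey sparsification argument; the $\varepsilon^{-2}$ in the exponent paired with a $\log(2dm)$ alphabet-size factor is the signature of Maurey, and this lemma is exactly the per-layer covering estimate that underpins the divide-and-conquer bound of Bartlett et al.\ cited earlier.

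First, I would fix an arbitrary $A$ with $\|A\|_{q,s}\le a$ and write it as a convex combination over a finite atom set. Because the $\|\cdot\|_{q,s}$ constraint controls the $\ell_q$ norm of each column $A_{\cdot j}$ (aggregated in $\ell_s$ across columns), every column lies in the scaled convex hull of signed, rescaled coordinate vectors of the form $\pm\alpha_j d^{1/q}e_i$; the collection of such atoms across all $m$ columns has cardinality at most $2dm$, which is exactly the alphabet size that will reappear in the final log.

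Second, I would invoke Maurey: view $A$ as the mean of a random matrix $E$ supported on this atom set, with probabilities dictated by the convex-combination coefficients. Drawing $k$ i.i.d.\ copies $E_1,\ldots,E_k$ and setting $\hat A = \tfrac{1}{k}\sum_{t=1}^{k}E_t$, independence yields
\begin{equation*}
\mathbb{E}\,\bigl\|X(A-\hat A)\bigr\|_2^2 \;=\; \frac{1}{k}\,\mathbb{E}\,\bigl\|X(E-A)\bigr\|_2^2 \;\le\; \frac{1}{k}\,\mathbb{E}\,\bigl\|XE\bigr\|_2^2,
\end{equation*}
so at least one realization $\hat A$ meets this upper bound on the squared approximation error. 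The crux is bounding $\mathbb{E}\|XE\|_2^2$: Hölder's inequality between the conjugate pair $(p,q)$ converts $\|Xv\|_2$ into a factor of $\|X\|_p\|v\|_q \le b\|v\|_q$, and aggregating across the $m$ columns via the conjugate pair $(r,s)$ yields the clean estimate $\mathbb{E}\|XE\|_2^2 \le a^2 b^2 m^{2/r}$. Choosing $k=\lceil a^2b^2m^{2/r}/\varepsilon^2\rceil$ then guarantees $\|X(A-\hat A)\|_2\le\varepsilon$.

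Finally, I would count: every sparse approximant $\hat A$ is determined by an unordered multiset of $k$ atoms drawn from an alphabet of size at most $2dm$, so the number of distinct approximants is at most $(2dm)^k$, and taking logarithms produces the stated bound. The main obstacle is pinning down the exponent $m^{2/r}$ precisely in the variance calculation: the factor arises from Hölder with exponent $r$ (conjugate to $s$) in the column-wise aggregation, and matching it to the $\|\cdot\|_{q,s}$ structure requires carefully normalizing the per-column atom probabilities so that they reproduce $s$-weighted column norms rather than uniform weights. The hypothesis $p\le 2$ enters exactly here, to ensure the Hölder pairing that converts the $L_2$ approximation norm into the column $\ell_q$ norms is valid.
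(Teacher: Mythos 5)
Your plan is the standard Maurey sparsification argument, which is exactly the route the paper takes for this lemma: the paper does not reprove it but imports it from Bartlett et al.\ with the explicit remark that it rests on the Maurey sparsification lemma, and your variance identity, the $2dm$ atom alphabet, the H\"older step producing $a^2b^2m^{2/r}$, and the $(2dm)^k$ multiset count are precisely the ingredients of that cited proof. The one detail to pin down when writing it out is that the atom set must be fixed independently of $A$ (scaled to the extremal value $abm^{2/r}$ rather than to per-column weights $\alpha_j$ of a particular $A$), so that the resulting family of $k$-sparse combinations is a single cover valid for the whole ball $\|A\|_{q,s}\le a$ --- which is what your ``carefully normalizing the per-column atom probabilities'' caveat amounts to.
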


\subsection{Covering Bound for the Hypothesis Space of Chain-like Neural Network}

This subsection considers the upper bound for the covering number of the hypothesis space induced by the stem of a deep neural network. Intuitively, following the stem from the first vertex $N(1)$ to the last one $N(L)$, every weight matrices and nonlinearities increase the complexity of the hypothesis space that could be computed by the stem. Following this intuition, we use an induction method to approach the upper bound. The result is summarized as Lemma \ref{coverChainBoundGeneral}. This lemma is originally given in the work by Bartlett et al. \cite{bartlett2017spectrally}. Here to make this work complete, we recall the main part of the proof but omit the part for $\varepsilon$.

\begin{proof}[Proof of Lemma \ref{coverChainBoundGeneral}]
We use an induction procedure to prove the lemma.

(1) The covering number of the hypothesis space computed by the first weight matrix $A_{1}$ can be straightly upper bounded by Lemma \ref{matrixCover}.

(2) The vertex after the $j$-th nonlinearity is $N(2j+1)$. Suppose $\mathcal W_{2j+1}$ is an $\varepsilon$-cover of the hypothesis space $\mathcal H_{2j+1}$ induced by the output hypotheses in the vertex $N(2j+1)$. Suppose there is a weight matrix $A_{j+1}$ directly follows the vertex $N(2j+1)$. We then analyze the contribution of the weight matrix $A_{j+1}$. Assume that there exists an upper bound $s_{j+1}$ of the norm of $A_{j+1}$. For any $F_{2j+1}(X) \in \mathcal H_{2j+1}$, there exists a $W(X) \in \mathcal W_{2j+1}$ such that
\begin{equation}
\label{ithCover}
	\| F_{2j+1}(X) - W(X) \| \le \varepsilon_{2j+1} ~.
\end{equation}
Lemma \ref{matrixCover} guarantees that for any $W(X) \in \mathcal W_{2j+1}$ there exists an $\varepsilon_{2j+1}$-cover $\mathcal W_{2j+2}(W)$ for the function space $\{ W(X)A_{j+1}: W(X) \in \mathcal W_{2j+1}, \| A_{j+1} \| \le s_{j+1}\}$, i.e., for any $W'(X) \in \hat{\mathcal H}_{2j+1}$, there exists a $V(X) \in \{ W(X)A_{j+1}: W(X) \in \mathcal W_{2j+1}, \| A_{j+1} \| \le s_{j+1}\}$ such that
\begin{equation}
\label{ithMatrixCover}
	\| W'(X) - V(X) \| \le \varepsilon_{2j+1} ~.
\end{equation}
As for any $F_{2j+1}'(X) \in \mathcal H_{2j+2} \triangleq \{ F_{2j+1}(X)A_{j+1}: F_{2j+1}(X) \in \mathcal H_{2j+1}, \| A_{j+1} \| \le c\}$, there is a $F_{2j+1}(X) \in \mathcal H_{2j+1}$ such that
\begin{equation}
\label{matrixTrans}
	F_{2j+1}'(X) = F_{2j+1}(X) A_{j+1} ~.
\end{equation}
Thus, applying eqs. (\ref{ithCover}), (\ref{ithMatrixCover}), and (\ref{matrixTrans}), we get the following inequality
\begin{align}
\label{radiusInductionMatrix}
	& \| F_{2j+1}'(X) - V(X) \| \nonumber\\
	= & \| F_{2j+1}(X)A_{j+1} - V(X) \| \nonumber\\
	= & \| F_{2j+1}(X)A_{j+1} - W(X)A_{j+1} + W(X)A_{j+1} - V(X) \| \nonumber\\
	\le & \| F_{2j+1}(X)A_{j+1} - W(X)A_{j+1} \|  + \| W(X)A_{j+1} - V(X) \| \nonumber\\
	\le & \| F_{2j+1}(X) - W(X) \| \| A_{j+1} \| + \varepsilon_{2j+1} \nonumber\\
	\le & s_{j+1} \varepsilon_{2j+1} + \varepsilon_{2j+1} \nonumber\\
	= & (s_{j+1} + 1) \varepsilon_{2j+1} ~.
\end{align}
Therefore, $\bigcup_{W \in \mathcal W_{2j+1}} \mathcal W_{2j+2}(W)$ is a $(s_{j+1} + 1) \varepsilon_{2j+1}$-cover of $\mathcal H_{2j+2}$. Let's denote $(s_{j+1} + 1) \varepsilon_{2j+1}$ as $\varepsilon_{2j+2}$. Apparently,
\begin{align}
	& \mathcal N(\mathcal H_{2j+2}, \varepsilon_{2j+2}, \| \cdot \|) \nonumber\\
	\le & \left|\bigcup_{W \in \mathcal W_{2j+1}} \mathcal W_{2j+2}(W) \right| \nonumber\\
	\le & \left|\mathcal W_{2j+1}\right| \cdot \sup_{W \in \mathcal W_{2j+1}} \left|\mathcal W_{2j+2}(W)\right| \nonumber\\
	\le & \mathcal N(\mathcal H_{2j+1}, \varepsilon_{2j+1}, \| \cdot \|) \nonumber\\
	& \sup_{\substack{(A_{1}, \ldots, A_{j})\\ \forall j \le j,~A_{i} \in \mathcal B_{i}}} \mathcal N \left( 
	\left\{ A_{j+1}F_{2j+1}(X): A_{j+1} \in \mathcal B_{j+1} \right\}, \varepsilon_{2j+1}, \| \cdot \|_{2j+1}\right) ~.
\end{align}
Thus, $\mathcal N(\mathcal W_{2j+1}, \varepsilon_{2j+1}, \| \cdot \|) \cdot \mathcal N(\mathcal W_{2j+2}, \varepsilon_{2j+2}, \| \cdot \|)$ is an upper bound for the $\varepsilon_{2j+2}$-covering number of the hypotheses space $\mathcal H_{i+1}$.

(3) The vertex after the $j$-th weight matrix is $N(2j-1)$. Suppose $\mathcal W_{2j-1}$ is an $\varepsilon_{2j-1}$-cover of the hypothesis space $\mathcal H_{2j-1}$ induced by the output hypotheses in the vertex $N(2j-1)$. Suppose there is a nonlinearity $\sigma_{j}$ directly follows the vertex $N(2j-1)$. We then analyze the contribution of the nonlinearity $\sigma_{j}$. Assume that the nonlinearity $\sigma_{j}$ is $\rho_{j}$-Lipschitz continuous. Apparently, $\sigma_{j}(\mathcal W_{2j-1})$ is a $\rho\varepsilon_{2j-1}$-cover of the hypothesis space $\sigma_{j}(\mathcal H_{2j-1})$. Specifically, for any $F' \in \sigma(\mathcal H_{2j-1})$, there exits a $F \in \mathcal H_{2j-1}$ that $F' = \sigma_{j}(F)$. Since $\mathcal W_{2j-1}$ is an $\varepsilon_{2j-1}$-cover of the hypothesis space $\mathcal H_{2j-1}$, there exists a $W \in \mathcal W_{2j-1}$ such that
\begin{equation}
	\| F - W_{2j-1} \| \le \varepsilon_{2j-1} ~.
\end{equation}
Therefore, we have the following equation
\begin{align}
\label{radiusInductionNonlinearity}
	& \| F' - \sigma_{j}(W_{2j-1}) \| \nonumber\\
	= & \| \sigma_{j}(F) - \sigma_{j}(W_{2j-1}) \| \nonumber\\
	\le & \rho_{j} \| F - W_{2j-1} \| = \rho_{j}\varepsilon_{2j-1} ~.
\end{align}
We thus prove that $\mathcal W_{2j} \triangleq \sigma_{j}(\mathcal W_{2j-1})$ is a $\rho_{j} \varepsilon_{2j-1}$-cover of the hypothesis space $\sigma_{j}(\mathcal H_{2j-1})$. Additionally, the covering number remains the same while applying a nonlinearity to the neural network.

By analyzing the influence of weight matrices and nonlinearities one by one, we can get eq. (\ref{formulaCoverChainBoundGeneral}). As for $\varepsilon$, the above part indeed gives an constructive method to obtain $\varepsilon$ from all $\varepsilon_{i}$ and $\varepsilon_{u,v,j}$. Here we omit the explicit formulation of $\varepsilon$ in terms of $\varepsilon_{i}$ and $\varepsilon_{u,v,j}$, since it could not benefit our theory.
\end{proof}

\subsection{Covering Bound for the Hypothesis Space of Deep Neural Networks with Residual Connections}
\label{proofCoverBoundGeneral}

In Subsection \ref{subSecCoverBoundGeneral}, we give a covering bound generally for all deep neural networks with residual connections. The result is summarised as Theorem \ref{coverBoundGeneral}. In this subsection, we give a detailed proof of Theorem \ref{coverBoundGeneral}.

\begin{proof}[Proof of Theorem \ref{coverBoundGeneral}]
To approach the covering bound for the deep neural networks with residuals, we first analyze the influence of adding a vine to a deep neural network, and then use an induction method to obtain a covering bound for the whole network.

All vines are connected with the stem at two points that is respectively after a nonlinearity and before a weight matrix. When the input $F_{u}(X)$ of the vine $V(u,v,i)$ is fixed, suppose all the hypothesis functions $F_{V}^{u,v,i}(X)$ computed by the vine $V(u,v,i)$ constitute a hypothesis space $\mathcal H_{V}^{u,v,i}$. As a vine is also a chain-like neural network constructed by stacking a series of weight matrices and nonlinearities, we can straightly apply Lemma \ref{coverChainBoundGeneral} to approach an upper bound for the covering number of the hypothesis space $\mathcal H_{V}^{u,v,i}$. It is worth noting that vines could be identity mappings. This situation is normal in ResNet -- there are $13$ out of all the $16$ vines are identities. For the circumstances that the vines are identities, the hypothesis space computed by the vine only contains one element -- an identity mapping. The covering number of the hypothesis space for the identities are apparently $1$.

Applying Lemmas \ref{matrixCover} and \ref{coverChainBoundGeneral}, there exists an $\varepsilon_{v}$-cover $\mathcal W_{v}$ for the hypothesis space $\mathcal H_{v}$ with a covering number $\mathcal N(\mathcal H_{v}, \varepsilon_{i}, \| \cdot \|)$, as well as an $\varepsilon_{V}^{u,v,i}$-cover $\mathcal W_{V}^{u,v,i}$ for the hypothesis space $\mathcal H_{V}^{u,v,i}$ with a covering number $\mathcal N(\mathcal H_{V}^{u,v,i}, \varepsilon_{i}, \| \cdot \|)$.

The hypotheses computed by the vine $V(u,v,i)$ and the deep neural network without $V(u,v,i)$, i.e., respectively, $F_{v}(X)$ and $F_{V}^{u,v,i}$, are added element-wisely at the vertex $V(v)$. We denote the space constituted by all $F' \triangleq F_{v}(X) + F_{V}^{u,v,i}(X)$ as $\mathcal H'_{v}$.

Let's define a function space as $\mathcal W'_{v} \triangleq \{ W_{S} + W_{V}: W_{S} \in \mathcal W_{v}, W_{V} \in \mathcal W_{V}^{u,v,i} \}$. For any hypothesis $F' \in \mathcal H'_{v}$, there must exist an $F_{S} \in \mathcal H_{v}$ and $F_{V} \in \mathcal H_{V}^{u,v,i}$ such that
\begin{equation}
	F'(X) = F_{S}(X) + F_{V}(X) ~.
\end{equation}
Because $\mathcal W_{v}$ is an $\varepsilon_{v}$-cover of the hypothesis space $\mathcal H_{v}$. For any hypothesis $F_{S} \in \mathcal H_{v}$, there exists an element $W_{ F_{S}}(X) \in \mathcal W_{v}$, such that
\begin{equation}
\label{coverOriginal}
	\| F_{S}(X) - W_{ F_{S}}(X) \| \le \varepsilon_{v} ~.
\end{equation}
Similarly, as $\mathcal W_{V}^{u,v,i}$ is an $\varepsilon_{V}^{u,v,i}$-cover of $\mathcal H_{V}^{u,v,i}$, we can get a similar result. For any hypothesis $F_{V}(X) \in \mathcal H_{V}^{u,v,i}$, there exists an element $W_{F_V}(X) \in \mathcal W_{V}^{u,v,i}$, such that
\begin{equation}
\label{coverVine}
	\| F_{V}(X) - W_{F_V}(X) \| \le \varepsilon_{V}^{u,v,i} ~.
\end{equation}
Therefore, For any hypothesis $F'(X) \in \mathcal H'_{v}$, there exists an element $W(X) \in \mathcal W'$, such that $W(X) = W_{ F_{S}}(X) +W_{F_V}(X)$ satisfying eqs. (\ref{coverOriginal}) and (\ref{coverVine}), and furthermore,
\begin{align}
	& \| F'(X) - W(X) \| \nonumber\\
	= & \| F_{V}(X) + F_{S}(X) - W_{F_V}(X) - W_{ F_{S}}(X) \| \nonumber\\
	= & \| (F_{V}(X) - W_{F_V}(X)) + (F_{S}(X) - W_{ F_{S}}(X)) \| \nonumber\\
	\le & \| F_{V}(X) - W_{F_V}(X) \| + \| F_{S}(X) - W_{ F_{S}}(X) \| \nonumber\\
	\le & \varepsilon_{V}^{u,v,i} + \varepsilon_{v} ~.
\end{align}
Therefore, the function space $\mathcal W'_{v}$ is an $(\varepsilon_{V}^{u,v,i} + \varepsilon_{v})$-cover of the hypothesis space $\mathcal H'_{v}$. An upper bound for the cardinality of the function space $\mathcal W'_{v}$ is given as below (it is also an $\varepsilon_{V}^{u,v,i} + \varepsilon_{v}$-covering number of the hypothesis space $\mathcal H'_{v}$):
\begin{align}
	& \mathcal N(\mathcal H'_{v}, \varepsilon_{V}^{u,v,i} + \varepsilon_{v}, \| \cdot \|) \nonumber\\
	\le & |\mathcal W'_{v}| \le |\mathcal W_{v}| \cdot |\mathcal W_{V}^{u,v,i}| \nonumber\\
	\le & \sup_{F_{v-2}} \mathcal N(\mathcal H_{v}, \varepsilon_{i}, \| \cdot \|) \cdot \sup_{F_{u}} \mathcal N(\mathcal H_{V}^{u,v,i}, \varepsilon_{V}^{u,v,i}, \| \cdot \|) \nonumber\\
	\le & \sup_{F_{v-2}} \mathcal N_{v} \cdot \sup_{F_{u}} \mathcal N_{V}^{u,v,i} ~,
\end{align}
where $\mathcal N_{v}$ and $\mathcal N_{V}^{u,v,i}$ can be obtained from eq. (\ref{formulaCoverChainBoundGeneral}) in Lemma \ref{coverChainBoundGeneral}, as the stem and all the vines are chain-like neural networks.

By adding vines to the stem one by one, we can construct the whole deep neural network. Combining Lemma \ref{coverChainBoundGeneral} for the covering number of $F_{v-1}(X)$ and $F_{u}(X)$, we further get the following inequality:
\begin{align}
	\mathcal N(\mathcal H, \varepsilon, \| \cdot \|) \le \prod_{j = 1}^{L} \sup_{F_{M(j)}} \mathcal N_{M(j+1)} \prod_{(u,v,i) \in I_{V}} \sup_{F_{u}} \mathcal N^{u,v,i}_{V} ~.
\end{align}
Thus, we prove eq. (\ref{formulaCoverBoundGeneral}) of Theorem \ref{coverBoundGeneral}.

As for $\varepsilon$, the above part indeed gives an constructive method to obtain $\varepsilon$ from all $\varepsilon_{i}$ and $\varepsilon_{u,v,j}$. Here we omit the explicit formulation of $\varepsilon$ in terms of $\varepsilon_{i}$ and $\varepsilon_{u,v,j}$, since it could be extremely complex and does not benefit our theory.
\end{proof}

\subsection{Covering Bound for the Hypothesis Space of ResNet}

In Subsection \ref{subSecCovBoundResNet}, we give a covering bound for ResNet. The result is summarized as Lemma \ref{covBoundResNet}. In this subsection, we give a detailed proof of Lemma \ref{covBoundResNet}.

\begin{proof}[Proof of Lemma \ref{covBoundResNet}]

There are $34$ weight matrices and $35$ nonlinearities in the stem of the $34$-ResNet. Let's denote the weight matrices respectively as $A_{1}$, ... ,$A_{34}$ and denote the nonlinearities respectively as $\sigma_{1}$, ... , $\sigma_{35}$. Apparently, there are $34 + 35 + 1 = 70$ vertexes in the network, where $34$ is the number of weight matrices and $35$ is the number of nonlinearities. We denote them respectively as $N(1)$, ... , $N(70)$. Additionally, there are $16$ vines which are respectively denoted as $V(4i - 1, 4i+3, 1)$, $i = \{1, \ldots, 16\}$, where $4i - 1$ and $4i+3$ are the indexes of the vertexes that the vine connected. Among all the $16$ vines, there are $3$, $V(15, 19, 1)$, $V(31, 35, 1)$, and $V(55, 59, 1)$, respectively contain one weight matrix, while all others are identities mappings. For the vine $V(4i-1, 4i+3, 1)$, $i = 4, 8, 14$, we denote the weight matrix in the vine as $A^{4i-1, 4i+3, 1}_{1}$.

Applying Theorem \ref{coverBoundGeneral}, we straightly get the following inequality:
\begin{align}
\label{coverBoundResNet}
	\log \mathcal N(\mathcal H, \varepsilon, \| \cdot \|) \le \sum_{j=1}^{34} \sup_{F_{2j-1}(X)} \log \mathcal N_{2j+1} + \sum_{(u,v,i) \in I_{V}} \sup_{F_{u}(X)} \log \mathcal N^{u,v,1}_{V} ~,
\end{align}
where $\mathcal N_{2j+1}$ is the covering number of the hypothesis space constituted by all outputs $F_{2j+1}(X)$ at the vertex $N(2j+1)$ when the input $F_{2j-1}(X)$ of the vertex $N(2j-1)$ is fixed, $\mathcal N^{u,v,1}_{V}$ is the covering number of the hypothesis space constituted by all outputs $F_{V}^{u,v,i}(X)$ of the vine $V(u,v,1)$ when the input $F_{v}(X)$ is fixed, and $I_{V}$ is the index set $\{(4i-1, 4i+3, 1), i = 1, \ldots, 16\}$.

Applying Lemma \ref{matrixCover}, we can further get an upper bound for the $\varepsilon_{2j+1}$-covering number $\mathcal N_{2j+1}$. The bound is expressed as the following inequality:
\begin{equation}
\label{coverBoundStemResNet}
	\log \mathcal N_{2j+1} \le \frac{b_{j+1}^{2} \| F_{2j+1}(X^{T})^{T} \|_{2}^{2}}{\varepsilon_{2j+1}^{2}} \log(2W^{2}) ~,
\end{equation}
where $W$ is the maximum dimension among all features through the ResNet, i.e., $W = \max_{i} n_{i}$, $i = 0, 1, \ldots, L$. Also, we can decompose $\| F_{2j+1}(X^{T})^{T} \|_{2}^{2}$ and utilize an induction method to obtain an upper bound for it.

(1) If there is no vine connected with the stem at the vertex $N(2j-1)$, we have the following inequality:
\begin{align}
\label{inductionNoVineResNet}
	& \| F_{2j+1}(X^{T})^{T} \|_{2} \nonumber\\
	= & \| \sigma_{j}(A_{j} F_{2j-1}(X^{T}))^{T} \|_{2} \nonumber\\
	= & \| \sigma_{j}(A_{j} F_{2j-1}(X^{T}))^{T} - \sigma_{j}(0) \|_{2} \nonumber\\
	\le & \rho_{j} \| A_{j} F_{2j-1}(X^{T})^{T} - 0 \|_{2} \nonumber\\
	= & \rho_{j} \| A_{j} F_{2j-1}(X^{T})^{T} \|_{2} \nonumber\\
	\le & \rho_{j} \| A_{j} \|_{\sigma} \cdot \| F_{2j-1}(X^{T})^{T} \|_{2} ~.
\end{align}

(2) If there is a vine $V(2j-3, 2j+1, 1)$ connected at the vertex $N(2j+1)$, then we get the following inequality:
\begin{align}
\label{inductionVineResNet}
	& \| F_{2j+1}(X^{T})^{T} \|_{2} \nonumber\\
	= & \|  \sigma_{j}(A_{j} \sigma_{j}(A_{j} F_{2j-3}(X^{T})))^{T} + A^{2j-3, 2j+1, 1}_{1} F_{2j-3}(X^{T})^{T} \|_{2} \nonumber\\
	\le & \|  \sigma_{j}(A_{j} \sigma_{j}(A_{j} F_{2j-3}(X^{T})))^{T} \|_{2} + \| A^{2j-3, 2j+1, 1}_{1} F_{2j-3}(X^{T})^{T} \|_{2} \nonumber\\
	\le & \rho_{j} \| A_{j} \|_{\sigma} \rho_{j-1} \| A_{j-1} \|_{\sigma} \cdot \| F_{2j-3}(X^{T})^{T} \|_{2} + \| A^{2j-3, 2j+1, 1}_{1} \|_{\sigma} \cdot \| F_{2j-3}(X^{T})^{T} \|_{2} \nonumber\\
	= & \left(\rho_{j} \rho_{j-1} \| A_{j} \|_{\sigma} \cdot \| A_{j-1} \|_{\sigma} + \| A^{2j-3, 2j+1, 1}_{1} \|_{\sigma}\right) \| F_{2j-3}(X^{T})^{T} \|_{2} ~.
\end{align}
Therefore, based on eqs. (\ref{inductionNoVineResNet}) and (\ref{inductionVineResNet}), we can get the norm of output of ResNet as in the main text.

Similar with $\mathcal N_{2j+1}$, we can obtain an upper bound for the $\varepsilon_{u,v,1}$-covering number $\mathcal N_{V}^{u,v,1}$. Suppose the output computed at the vertex $N(u)$ is $F_{u}(X^{T})$. Then, we can get the following inequality:
\begin{equation}
\label{coverBoundVineResNet}
	\log \mathcal N_{V}^{u,v,1} \le \frac{(b^{u,v,1}_{1})^{2} \| F_{u}(X^{T})^{T} \|_{2}^{2}}{\varepsilon_{u,v,1}^{2}} \log(2W^{2}) ~.
\end{equation}

Applying eqs. (\ref{coverBoundStemResNet}) and (\ref{coverBoundVineResNet}) to eq. (\ref{coverBoundResNet}), we thus prove eq. (\ref{formulaCovBoundResNet}).

As for the formulation of the radiuses of the covers, we also employ an induction method.

(1) Suppose the radius of the cover for the hypothesis space computed by the weight matrix $A_{1}$ and the nonlinearity $\sigma_{1}$ is $\varepsilon_{3}$. Then, applying eqs. (\ref{radiusInductionMatrix}) and (\ref{radiusInductionNonlinearity}), after the weight matrix $A_{2}$ and the nonlinearity $\sigma_{2}$, we get the following equation:
\begin{equation}
\label{formulaInductionHead}
	\varepsilon_{3} = (s_{2}+1)\rho_{2}\varepsilon_{1} ~.
\end{equation}

(2) Suppose the radius of the cover for the hypothesis space computed by the weight matrix $A_{j-1}$ and the nonlinearity $\sigma_{j-1}$ is $\varepsilon_{2j-1}$. Assume there is no vine connected around. Then, similarly, after the weight matrix $A_{2}$ and the nonlinearity $\sigma_{j}$, we get the following equation:
\begin{equation}
\label{formulaInductionStem}
	\varepsilon_{2j+1} = \rho_{j}(s_{j}+1)\varepsilon_{2j-1} ~.
\end{equation}

(3) Suppose the radius of the cover at the vertex $N(i)$ is $\varepsilon_{i}$. Assume there is a vine $V(u,u+4,1)$ links the stem at the vertex $N(u)$ and $N(u+4)$. Then, similarly, after the weight matrix $A_{2}$ and the nonlinearity $\sigma_{j}$, we get the following equation:
\begin{align}
\label{formulaInductionVine}
	\varepsilon_{2j+1} = & \varepsilon_{u+2}\left(s_{\frac{u-1}{2}} + 1\right)\rho_{\frac{u-1}{2}} + \varepsilon_{u}\left(s_{u,u+4,1} + 1\right) \nonumber\\
	= & \varepsilon_{u}\left(s_{\frac{u-1}{2}} + 1\right)\rho_{\frac{u-1}{2}}\left(s_{\frac{u-3}{2}} + 1\right)\rho_{\frac{u-3}{2}} + \varepsilon_{u}\left(s_{u,u+4,1} + 1\right) \nonumber\\
	= & \varepsilon_{u}\left(s_{\frac{u-1}{2}} + 1\right)\left(s_{\frac{u-3}{2}} + 1\right)\rho_{\frac{u-1}{2}}\rho_{\frac{u-3}{2}} + \varepsilon_{u}\left(s_{u,u+4,1} + 1\right) ~.
\end{align}

From eqs. (\ref{formulaInductionHead}), (\ref{formulaInductionStem}), and (\ref{formulaInductionVine}), we can obtain the following equation
\begin{align}
	\varepsilon = & \varepsilon_{1}\rho_{1}(s_{1}+1)\rho_{34}(s_{34}+1)\rho_{35} \prod_{\substack{1 \le i \le 16 \\ i \notin \{ 4,8,14 \}}} \left[ 
	\rho_{2i}(s_{2i} + 1)\rho_{2i+1}(s_{2i+1} + 1) + 1 \right] \nonumber\\
	& \prod_{i \in \{ 4,8,14 \}} \left[ 
	\rho_{2i}(s_{2i} + 1)\rho_{2i+1}(s_{2i+1} + 1) + s^{4i-1,4i+3,1}_{1} + 1 \right] ~.
\end{align}
	
Combining the definition of $\bar \alpha$:
\begin{align}
	\bar \alpha = & \rho_{1}(s_{1}+1)\rho_{34}(s_{34}+1)\rho_{35} \prod_{\substack{ \le i \le 16 \\ i \notin \{ 4,8,14 \}}} \left[ 
	\rho_{2i}(s_{2i} + 1)\rho_{2i+1}(s_{2i+1} + 1) + 1 \right]  \nonumber\\
	& \prod_{i \in \{ 4,8,14 \}} \left[ 
	\rho_{2i}(s_{2i} + 1)\rho_{2i+1}(s_{2i+1} + 1) + s^{4i-1,4i+3,1}_{1} + 1 \right] ~,
\end{align}
we can obtain that
\begin{equation}
	\varepsilon_{1} = \frac{\varepsilon}{\bar \alpha} ~.
\end{equation}
Applying eqs. (\ref{formulaInductionHead}), (\ref{formulaInductionStem}), and (\ref{formulaInductionVine}), we can get all $\varepsilon_{2j+1}$ and $\varepsilon^{u,u+4,1}$.

The proof is completed.
\end{proof}

\subsection{Generalization Bound for ResNet}
\label{sec:generalizartionBoundResNet}

\begin{proof}[Proof of Theorem \ref{generalizartionBoundResNet}]
We prove this theorem in $2$ steps: (1) We first apply Lemma \ref{covNumBound} to Lemma \ref{covBoundResNet} in order to get an upper bound on the Rademacher complexity of the hypothesis space computed by ResNet; and (2) We then apply the result of (1) to Lemma \ref{RadBound} in order to get a generalization bound.

(1) {\it Upper bound on the Rademacher complexity.}

Applying eq. (\ref{formulaCovNumBound}) of Lemma \ref{covNumBound} to eq. (\ref{reFormulaCovBoundResNet}) of Lemma \ref{covBoundResNet}, we can get the following inequality:
\begin{align}
	\mathfrak R(\mathcal H_{\lambda}|_{D}) \le & \inf_{\alpha > 0} \left( \frac{4\alpha}{\sqrt{n}} + \frac{12}{n} \int_{\alpha}^{\sqrt n} \sqrt{\log \mathcal N(\mathcal H_{\lambda}|_{D}, \varepsilon, \| \cdot |_{2})} \text{d}\varepsilon \right) \nonumber\\
	\le & \inf_{\alpha > 0} \left( \frac{4\alpha}{\sqrt{n}} + \frac{12}{n} \int_{\alpha}^{\sqrt n} \frac{\sqrt{R}}{\varepsilon} \text{d}\varepsilon \right) \nonumber\\
	\le & \inf_{\alpha > 0} \left( \frac{4\alpha}{\sqrt{n}} + \frac{12}{n} \sqrt{R} \log\frac{\sqrt{n}}{\alpha} \right) ~.
\end{align}
Apparently, the infinimum is reached uniquely at $\alpha = 3\sqrt{\frac{R}{n}}$. Here, we use a simpler and also widely used choice $\alpha = \frac{1}{n}$, and get the following inequality:
\begin{equation}
	\mathfrak R(\mathcal H_{\lambda}|_{D}) \le \frac{4}{n^{\frac{3}{2}}} + \frac{18}{n} \sqrt{R} \log n ~.
\end{equation}

(2) {\it Upper bound on the generalization error.}

Combining with eq. (\ref{formulaRadBound}) of Lemma \ref{RadBound}, we get the following inequality:
\begin{align}
	\Pr\{ \arg\max_{i} F(x)_{i} \ne y \} \le \hat{\mathcal R}_{\lambda}(F) + \frac{8}{n^{\frac{3}{2}}} + \frac{36}{n} \sqrt{R} \log n + 3 \sqrt{\frac{\log(1/\delta)}{2n}} ~.
\end{align}
The proof is completed.
\end{proof}

\section{Conclusion and Future Work}
\label{discussionConclusion}

We provide an upper bound for the covering number of the hypothesis space induced by deep neural networks with residual connections. The covering bound for ResNet, as an exemplary case, is then proposed. Combining various classic results in statistical learning theory, we further obtain a generalization bound for ResNet. With the generalization bound, we theoretically guarantee the performance of ResNet on unseen data. Considering the generality of our results, the generalization bound for ResNet can be easily extended to many state-of-the-art algorithms, such as DenseNet and ResNeXt.

This paper is based on the complexity of the whole hypothesis space. Some recent experimental results give an insight that SGD only explores a part of the hypothesis space and never visits other places. Thus, involving localisation properties into the analysis could lead to a tighter upper bound of the generalization error. However, there still lacks concrete evidence to support the localisation property, and the exact mechanism still remains an open problem. We plan to explore this problem in the future work.

\section*{Acknowledgment}
This work was supported by Australian Research Council under Grants FL170100117, DP180103424, IH180100002, and DE190101473.

\newpage
\bibliography{ResNet}
\end{document}